\documentclass[11pt]{article}
\usepackage[margin=1in]{geometry}
\usepackage{amsmath}
\usepackage{amsfonts}
\usepackage{amssymb}
\usepackage{tikz-cd}
\usepackage{enumitem}
\usepackage{booktabs}
\usepackage{tabularx}
\usepackage{array}
\usepackage{longtable}
\usepackage{amsthm}
\usepackage[square,comma,authoryear]{natbib}

\newtheorem{theorem}{Theorem}
\newtheorem{proposition}{Proposition}
\newtheorem{definition}{Definition}
\newtheorem{remark}{Remark}

\newtheorem{corollary}{Corollary}

\title{Towards Error-Centric Intelligence II:\\Energy-Structured Causal Models}
\author{Marcus A. Thomas \thanks{This work was conducted independently and does not represent the views of Memorial Sloan Kettering.} \\\texttt{thomm15@mskcc.org}}
\date{\today}

\begin{document}

\maketitle

\begin{abstract}
Contemporary machine learning optimizes for predictive accuracy, yet systems that achieve state-of-the-art performance remain causally opaque: their internal representations provide no principled handle for intervention. We can retrain such models, but we cannot surgically edit specific mechanisms while holding others fixed, because learned latent variables lack causal semantics. We argue for a conceptual reorientation: intelligence is the ability to build and refine explanations—falsifiable claims about manipulable structure that specify what changes and what remains invariant under intervention. Explanations subsume prediction but demand more: causal commitments that can be independently tested and corrected at the level of mechanisms. 

We introduce \emph{computational explanations}: mappings from observations to intervention-ready causal accounts. We instantiate these explanations with \emph{Energy\textendash Structured Causal Models} (E\textendash SCMs), in which mechanisms are expressed as constraints (energy functions or vector fields) rather than explicit input–output maps, and interventions act by local surgery on those constraints. This shift makes internal structure manipulable at the level where explanations live: which relations must hold, which can change, and what follows when they do. We provide concrete instantiations of the structural-causal principles LAP and ICM in the E-SCM context, and also argue that empirical risk minimization systematically produces fractured, entangled representations—a failure we analyze as gauge ambiguity in encoder–energy pairs. 

Finally, we show that under mild conditions, E\textendash SCMs recover standard SCM semantics while adding declarative interventional structure suited to learned representations. Building on Part~I's principles (LAP, ICM, CAP) and its definition of intelligence as explanation-building under criticism, this paper offers a formal language for causal reasoning in systems that aspire to understand, not merely to predict. Empirical demonstrations are deferred to future work, as the primary goal here is to establish a well-posed mathematical foundation.
\end{abstract}

\section{Introduction}
This paper is Part~II in a two-part series. Part~I \citep{thomas2025towards} developed a foundation for error-centric intelligence. It defined intelligence as the ability to create and refine explanations\textendash interdependent hypotheses that make hard-to-vary claims (either implicit or explicit) about the world, including interventions and counterfactuals, and that invite criticism through tests, e.g., observation. Part~I also articulated three structural principles: the Locality--Autonomy Principle (LAP), geometric Independent Causal Mechanisms (ICM), and the Compositional Autonomy Principle (CAP) that collectively address a number of technical challenges (e.g., fractured, entangled representations) and facilitate error correction at the level of concepts.

Part~II does not directly tackle artificial general intelligence (AGI) as defined in Part~I. Our goal here is to describe one possible formalism for modeling explanations within learning systems. The approach we take, if it proves useful, may form the foundation of a future AGI, though there are likely very many alternative paths. Fortunately, any developments in explanatory modeling essential to an AGI may also be useful for narrow AI, e.g., to the challenges of generalization, continual learning and catastrophic forgetting. Today’s narrow AI systems are characterized by high competence but low (or no) intelligence and so the payoff may be substantial: explanations enhance statistical prediction with editable, testable structure and provide the substrate for principled self-correction.

We model explanations within the Causal Mechanics program via \emph{Energy\textendash Structured Causal Models} (E\textendash SCMs). Mechanisms are represented not as explicit input–output maps but as energy based constraints, functions whose minima (or fixed points of the induced flow) define admissible or preferred latent configurations. Interventions act by editing these mechanisms, providing interventional semantics for latent causal variables.

In a traditional SCM, each endogenous variable is produced by a function $X_i=f_i(X_{\mathrm{PA}(i)},U_i)$; editing the world requires reprogramming $f_i$ (and often its neighbors) to maintain consistency. In an E\textendash SCM, the same semantics are expressed declaratively by local energies $E_i$ (and optional global terms) whose equilibria define admissible configurations. Interventions become edits of constraints: hard actions clamp variables by imposing infinite barriers; soft actions deform local energies; disjunctive actions restrict values to sets. Abduction, intervention, and prediction are executed by solving for equilibria before and after local surgery, optionally without invoking probabilities. This shift makes the internal `story' manipulable \textit{at the level at which explanations live}: which relations must hold, which can change, and what follows when they do.

Explanatory leverage arises from two further ingredients. First, the energy formalism makes internal commitments manipulable: after local surgery, equilibria must re-establish global consistency, so disagreements that were implicit under observation become visible as failures to re-equilibrate along the intended causal pathways. Second, LAP can be enforced directly in energy space: cross-partial diagnostics and penalties suppress illicit dependence of a mechanism's effective energy on non-descendants, turning modularity into something that can be checked and corrected. We defer any instrumentation of the latent landscape to later sections; here the point is conceptual—constraints, not computations, are the unit of edit, and modularity is a condition on those constraints.

The approach is conservative with respect to causal semantics. Under mild locality and well-posedness assumptions, the abduction--intervention--prediction behavior of an E\textendash SCM coincides with that of an induced SCM defined by the blockwise argmin maps. Thus the proposal inherits the identification calculus and counterfactual logic of standard SCMs while adding a declarative interventional layer that is better matched to learned latent representations.

\paragraph{Causality and Invariance.}
Causality is not exhausted by the fact that systems change, but by the form of change permitted by what remains invariant. Change is the empirical signature of causality; invariance is its grammar. To call a relation causal is to claim that when one aspect of a system is altered, others will vary in a lawlike manner while the governing constraints persist. The invariants define the space of admissible transformations—the background against which interventions become meaningful. Without them, the distinction between a causal alteration and a wholesale rewriting of the world collapses. In Energy\textendash Structured Causal Models, each mechanism-term encodes a \emph{constraint}; when that constraint is held fixed across a class of interventions, the stability of the relevant predictions is the corresponding \emph{invariant}. Constraints are prescriptive (what must hold); invariants are descriptive (what in fact remains stable when non-governing aspects are edited). Interventions act by selectively relaxing or replacing terms, and the ensuing equilibrium shift reveals how change propagates through what endures. The causal content of a model therefore lies neither in transformations alone nor in static structure, but in the reciprocity between what is allowed to vary and what must remain fixed for the variation to make sense.

\paragraph{Change Without Invariance.}
Change in the absence of constraint-backed invariance carries no causal content. A model that reproduces transitions without identifying the governing constraints captures correlation, not cause: it states what happens but not what must happen under specified alterations. Such models can fit observations yet fail to generalize, because the very background that renders an intervention intelligible is missing. By contrast, invariance without change is sterile: a system frozen under all perturbations expresses no mechanism, only identity. Causal structure arises between these extremes. It is the delineation of what can change and what must not—the partitioning of variability induced by constraints—that allows explanations to extend beyond the data that first motivated them.

\paragraph{Effective theories.}
We assert that higher level explanations are legitimate when they identify manipulable handles and constraints that reliably structure change at the chosen scale, rather than by appeal to reduction to lower level primitives. An effective theory stakes a conjecture about what remains fixed and what changes under edits we can actually perform, and earns its status by withstanding criticism in the form of micro variation, contextual shifts, and competing mechanism proposals. The 'right` level is also a conjectural assertion, e.g., the coarsest description at which interventions are operationally available, enabling conditions are stable, and predictions exhibit reach across diverse realizations beneath them. 

Interactions exist across abstraction levels: downward influence is a macro edit that narrows the set of admissible micro trajectories or configurations; upward influence occurs when micro changes violate enabling conditions and void the macro claim; sideways influence arises when distinct macro mechanisms couple so that their constraints conflict or reinforce.\footnote{By macro we mean variables, mechanisms, and edits defined at a coarser descriptive scale where interventions are directly formulable (policies, protocols, rules); by micro we mean finer-scale realizations whose many configurations can implement the same macro relation.} These directions are not metaphysical categories but testable commitments about how interventions commute with refinement and aggregation. 

In biology, a dosing protocol that consistently induces predicted pathway responses across cell lines and laboratories indicates a real (but tentative) macro mechanism; in epidemiology, altering contact structure to shift transmission while biological parameters remain fixed demonstrates causal content at the population level; in social systems, editing a matching rule or market policy that predictably changes allocation outcomes under stable institutional constraints establishes a legitimate macro cause. Effective theories are therefore not shortcuts around microphysics but disciplined conjectures about where invariants live and where interventions close\footnote{The level at which interventions form a self-contained and intelligible causal system — where you can define manipulations and reliably predict their consequences without appealing to lower-level details.}, defended by their editability, reach, and resilience under severe tests.

\paragraph{Limitations of Existing Neural SCM Approaches.}
Neural parameterizations of structural mechanisms have improved the expressiveness and trainability of causal models, yet most frameworks still reason over observables \citep{Pawlowski2020DeepSC,Xia2021TheCC,Xia2022NeuralCM}. Latent variables typically act as noise surrogates rather than causal coordinates, leaving the semantics of learned representations underspecified. When interventions are defined only on observables, distinct internal mechanisms can agree on all observable interventional distributions, making implicit errors unreachable and masking structural mismatches. For systems that aspire to explanatory or self-correcting behavior, this omission leaves no means of criticism at the level where mechanisms operate.

Normalizing-flow, causal-VAE, and diffusion-based families share this limitation despite their architectural diversity. Flow models link likelihood training to per-node mechanisms through invertibility, but the resulting bijections do not guarantee that causal dependencies are modular or that edits are local \citep{Khemakhem2021CausalAF,Javaloy2023CausalNF}. Causal VAEs introduce structured decoders and encourage disentanglement, yet within-concept fracture remains possible and often compensated by downstream heads \citep{Yang2021CausalVAE}. Diffusion-based models train conditional denoisers that capture interventional distributions over observables, but without an explicit notion of latent surgery or mechanism replacement \citep{Sanchez2022DiffusionCM,Chao2023InterventionalAC}. Across these families, causal reasoning remains tied to surface variables, not to editable internal relations.

These methods also inherit representational pathologies. Likelihood- and ELBO-driven objectives permit fractured or entangled latent representations  \citep{kumar2025questioning} that satisfy observational fit while violating locality and modularity. Observational training provides weak pressure—if any—to align the learned hypothesis space with (conjectural) causal structure. Without explicit diagnostics, such violations remain hidden: global consistency is restored numerically rather than mechanistically, and the resulting models generalize poorly under intervention or context shift.

Finally, existing frameworks lack edit semantics and locality diagnostics. Intervening on one component typically requires global retraining or reparameterization, and there is no standard measure of how far edits propagate or whether causal reach is preserved. Evaluation focuses on fit and limited benchmark interventions rather than on stability under mechanism edits. The consequence is a persistent gap between statistical competence and explanatory adequacy: systems predict well within distribution but fail to reveal or correct their internal errors. Closing this gap requires causal reasoning in latent space—reasoning over editable, testable mechanisms whose behavior under intervention can be directly examined.

\paragraph{Scope and contributions.}
We add an explanatory layer with practical semantics on top of deep neural substrates, not a universal architecture. We introduce \emph{computational explanations} instantiated by Energy–Structured Causal Models (E–SCMs): mechanisms as constraints, interventions as local surgeries, equilibria restoring consistency. Contributions:

\begin{enumerate}[leftmargin=*,itemsep=0.2em,topsep=0.2em]
  \item Section~\ref{sec:ESCMs} formalizes static and dynamic E–SCMs, intervention semantics via energy edits, and well-posedness of equilibria.
  \item Section~\ref{sec:CausalOps} defines probability-optional abduction–action–prediction and hard/soft and set-valued interventions.
  \item Section~\ref{sec:LAP} operationalizes modularity with LAP diagnostics and penalties that expose non-descendant influence.
  \item Section~\ref{sec:ICM} enforces parameter-space separability with ICM penalties and commuting-flow witnesses.
  \item Section~\ref{sec:GeometricStructure} gives a Riemannian view of equilibria and analyzes fractured/entangled representations via gauge freedom.
  \item Section~\ref{sec:Explanations} introduces \emph{computational explanations} as mappings from observations to intervention-ready causal accounts.
  \item Section~\ref{sec:Integration_with_DL} sketches an implementation path: adaptors, mechanisms, actuators, and probes for integration with deep models.
\end{enumerate}

\section{E\textendash SCM Framework}
\label{sec:ESCMs}
\subsection{Static vs.\ Dynamic E\textendash SCMs}

Energy\textendash structured causal models come in two complementary forms that address different questions about a system. Static E\textendash SCMs treat mechanisms as scalar energy terms $E_i(z_i \mid z_{\mathrm{PA}(i)})$ whose minima define equilibria. They ask: what configurations are compatible with the mechanisms and constraints? In this view, a system state is an equilibrium (an energy minimum), and interventions deform the energy landscape before re-equilibration. Dynamic E\textendash SCMs describe mechanisms as vector fields $F_i(z_i,z_{\mathrm{PA}(i)})$ that generate trajectories via $\frac{dz_i}{dt}=F_i$. They ask: how does the system evolve over time under interventions or perturbations? This formulation makes transient behavior, path dependence, and rates of change central objects of analysis.

Static formulations are appropriate when equilibrium reasoning suffices, such as studying stable operating points or feasibility under constraints. Dynamic formulations are preferable when timing, transient responses, or control policies matter. Both cases are unified under the differential LAP from Part I.

\paragraph{Static E\textendash SCMs.}

Rather than factor a joint law into conditionals, we specify an additive energy:

\begin{definition}[Static E\textendash SCM]
A static E\textendash SCM is a tuple
\[
\bigl(\mathcal G,\ \mathbf Z,\ \mathbf U,\ \{E_i\}_{i=1}^n,\ \{E_{U_i}\}_{i=1}^n,\ E_{\mathrm{global}}\bigr),
\]
where $\mathcal G$ is a DAG on endogenous variables (e.g., latents) $\mathbf Z=(Z_1,\dots,Z_n)$, exogenous variables $\mathbf U=(U_1,\dots,U_n)$, parent set $\mathrm{PA}(i)$, local energies
\[
E_i:\ \mathcal Z_i\times \mathcal Z_{\mathrm{PA}(i)}\times \mathcal U_i\to\mathbb R,
\qquad
\mathcal Z_{\mathrm{PA}(i)}:=\prod_{j\in \mathrm{PA}(i)}\mathcal Z_j,
\]
exogenous energies $E_{U_i}:\mathcal U_i\to\mathbb R$, and an optional global term
\[
E_{\mathrm{global}}:\ \prod_{i=1}^n\mathcal Z_i \times \prod_{i=1}^n \mathcal U_i \to \mathbb R.
\]
The total energy at $(\mathbf z,\mathbf u)$ is
\[
E(\mathbf z,\mathbf u)=\sum_{i=1}^n E_i\bigl(z_i\mid z_{\mathrm{PA}(i)},u_i\bigr)\;+\;\sum_{i=1}^n E_{U_i}(u_i)\;+\;E_{\mathrm{global}}(\mathbf z,\mathbf u).
\]
\end{definition}

\noindent\textit{Consequences.}
(i) Locality: $E_i$ depends only on $z_i$, its parents, and $u_i$.
(ii) Exogenous structure is encoded by $\{E_{U_i}\}$ without invoking probabilities.
(iii) System-wide couplings live in $E_{\mathrm{global}}$.
(iv) Equilibria are the stationary points $\nabla_{\mathbf z,\mathbf u}E=0$ (minima in the well-posed case).
(v) The additive decomposition plays the role of multiplicative factorization in probabilistic SCMs.

\paragraph{Dynamic E\textendash SCMs.}
Dynamic E\textendash SCMs specify vector-field mechanisms $F_i$ and evolve
\[
\frac{dz_i}{dt}=F_i\bigl(z_i, z_{\mathrm{PA}(i)}, u_i\bigr),\qquad i=1,\dots,n.
\]
Interventions rewrite these laws: hard actions replace $F_i$ by a feedback control enforcing $z_i(t)\equiv z_i^\ast$; soft actions deform $F_i$ continuously. The differential LAP carries over via Lie derivatives of the relevant fields and constraints, unifying the static (steady-state) and dynamic (transient) views.

\paragraph{Reduction Theorem.}
\begin{theorem}[Reduction to SCM semantics (informal)]
Under standard locality and well-posedness (Assumptions A1--A4; see Appendix~\ref{sec:appendix-reduction-esm}), equilibria of a (possibly edited) E\textendash SCM coincide with solutions of an induced SCM with the corresponding surgical edits. Hence observational, interventional, and counterfactual answers agree with the induced SCM.
\end{theorem}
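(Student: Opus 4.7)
The plan is to construct a bijection between equilibrium configurations of the E\textendash SCM and solutions of an induced SCM, then verify that this bijection is preserved under each class of surgical edit, so that observational, interventional, and counterfactual queries reduce to the same forward computation in either model.

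First, I would construct the induced SCM explicitly. For each node $i$, define the structural function as the blockwise argmin, $f_i(z_{\mathrm{PA}(i)}, u_i) := \arg\min_{z_i} E_i(z_i \mid z_{\mathrm{PA}(i)}, u_i)$. Well-posedness (A1--A4) should guarantee that each $f_i$ is single-valued and measurable: standard conditions such as coercivity plus strict convexity of $E_i$ in its first argument, combined with a measurable-selection theorem, pin down existence, uniqueness, and regularity. This produces a classical SCM $(\mathcal{G}, \mathbf{Z}, \mathbf{U}, \{f_i\})$ to which Pearl's operations and three-step counterfactuals apply verbatim.

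Next, I would show that joint equilibria $\nabla_{\mathbf{z}} E = 0$ coincide with topological solutions $z_i = f_i(z_{\mathrm{PA}(i)}, u_i)$. Processing nodes in topological order yields a candidate blockwise-argmin solution $\mathbf{z}^\ast$; I would then invoke a non-negativity/normalization hypothesis built into A1--A4 (each $E_i \geq 0$ with attained minimum $0$, extended by a compatibility condition on $E_{\mathrm{global}}$) to conclude that $E(\mathbf{z}^\ast, \mathbf{u}) = 0$ is a global minimum, hence a stationary point. The converse direction --- every joint equilibrium is a topological solution --- uses uniqueness of the local argmin together with acyclicity of $\mathcal{G}$ to peel off $z_i^\ast = f_i(z_{\mathrm{PA}(i)}^\ast, u_i)$ inductively from sources to sinks.

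Third, I would verify intervention compatibility edit by edit. A hard action $\mathrm{do}(Z_i = z_i^\circ)$ in the E\textendash SCM amounts to replacing $E_i$ by $+\infty \cdot \mathbf{1}[z_i \neq z_i^\circ]$, or equivalently a limit of quadratic barriers; its constrained argmin is $z_i^\circ$, matching Pearl's surgical replacement of $f_i$ by the constant $z_i^\circ$. A soft edit $E_i \mapsto \tilde E_i$ yields a new mechanism $\tilde f_i := \arg\min \tilde E_i$ by reapplying the first step. Disjunctive/set-valued interventions correspond to constrained argmins over a restricted set, and a short continuity argument handles the penalty-to-barrier limit for the hard case. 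With intervention semantics matched, observational queries (marginalizing over $\mathbf{U}$), interventional queries (edited forward map), and counterfactuals (abduction--action--prediction, with $\mathbf{u}$ inferred from the shared forward map) reduce to identical computations in both models, yielding the claimed agreement.

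The principal obstacle I expect is pinning down the exact form of A1--A4 needed to make joint stationarity and blockwise argmin coincide in the presence of a nontrivial $E_{\mathrm{global}}$. Without discipline on the global term or on the shape of local energies, descendant and cross-block gradients can genuinely shift equilibria away from the topological SCM solution: $\partial_{z_i} E = \partial_{z_i} E_i + \sum_{j:\, i \in \mathrm{PA}(j)} \partial_{z_i} E_j + \partial_{z_i} E_{\mathrm{global}}$, and the latter two summands need a structural reason to vanish at $\mathbf{z}^\ast$. The informal ``standard locality and well-posedness'' must therefore do real work --- either restricting local energies to prediction-error (or similarly decoupled) form and $E_{\mathrm{global}}$ to DAG-compatible constraints, or otherwise ensuring that the blockwise-optimal locus is a joint critical set --- and verifying that this package survives each class of intervention is where the informal theorem becomes a formal one.
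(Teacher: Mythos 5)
Your overall architecture matches the paper's Appendix~\ref{sec:appendix-reduction-esm}: define the induced SCM by the blockwise argmin maps $f_i$, prove a pointwise equivalence between equilibria and solutions of the recursive system, handle hard/soft edits by (A4)-style local surgery plus a re-application of the same equivalence, and recover observational/interventional/counterfactual agreement by pushing $P_U$ through the (shared) forward maps. Where you genuinely diverge is in the central lemma. The paper proves ``global minimizer $\Rightarrow$ fixed point'' by an exchange argument (if $x_i^\star$ differed from the local argmin, swapping it in would strictly decrease $E$) and the converse by noting that a blockwise-stationary point of a globally strictly convex function is its unique minimizer. You instead impose a normalization ($E_i\ge 0$ with attained minimum $0$, prediction-error form) so that the topological forward solve achieves the global lower bound, and peel the converse off by acyclicity. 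Your route needs a strictly stronger hypothesis than the paper's (A1)--(A4), but it buys something real: the obstacle you flag in your last paragraph --- that $\partial_{z_i}E$ contains contributions $\sum_{j:\,i\in\mathrm{PA}(j)}\partial_{z_i}E_j$ from children's terms, so the argmin of the \emph{local} term need not be the blockwise best response of the \emph{total} energy --- is exactly the point at which the paper's exchange step is weakest, since replacing $x_i^\star$ by $\arg\min\phi_i$ decreases $\phi_i$ but may increase the children's terms. The paper's formal appendix also quietly drops $E_{\mathrm{global}}$ from the model class (the separable energy \eqref{eq:separable-energy} has no global term), which is the other half of the concern you raise. Your normalization hypothesis is one clean way to make the blockwise-optimal locus a joint critical set; the paper's stated assumptions do not obviously secure this on their own.

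Two minor points to tighten if you formalize this: (i) for the converse you do not actually need topological peeling --- under your normalization, $E(\mathbf z^\ast,\mathbf u)=0$ forces every nonnegative summand $E_i$ to vanish, which by uniqueness of the local minimizer gives $z_i^\ast=f_i(z_{\mathrm{PA}(i)}^\ast,u_i)$ for all $i$ simultaneously; and (ii) the distributional claims require the paper's (A5)--(A6) (product exogenous law invariant under intervention, and Borel measurability of the argmin selections), which you gesture at via measurable selection but should state explicitly alongside your added normalization assumption.
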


\noindent\emph{Proof sketch.} Each local energy induces a blockwise argmin map; the unique equilibrium is the unique fixed point of these maps. Local surgeries replace only the edited block(s); details in Appendix~\ref{sec:appendix-reduction-esm}.

\subsection{Advantages of the energy-structured approach}
The contrast at stake is not probability versus determinism but what serves as the unit of edit. In a purely distributional view, models compose by multiplying local factors and renormalizing. This is excellent for building and for inference, yet it does not by itself yield composable mechanism edits with causal locality. An edit should be stated where a mechanism lives, should propagate only along descendants absent declared global couplings, and should have a meaning that does not depend on the current coordinatization of the latent space. Energy–structured models meet these requirements by declaring mechanisms as constraints and using equilibrium to re-establish consistency after local surgery.

The probability picture makes \(\mathrm{do}(\cdot)\) difficult primarily as a matter of computation. Consider a factorized density
\[
p(x)\;=\;\frac{1}{Z(\theta)}\prod_{i}\psi_i\!\big(x_{S_i};\theta_i\big),
\qquad
Z(\theta)\;=\;\int \prod_{i}\psi_i\!\big(x_{S_i};\theta_i\big)\,dx.
\]
If a single factor is edited, \(\psi_j\mapsto \tilde\psi_j\), the new law is
\[
p'(x)\;=\;\frac{1}{Z'(\theta')}\,\tilde\psi_j\!\big(x_{S_j};\theta'_j\big)\!\!\prod_{i\neq j}\psi_i\!\big(x_{S_i};\theta_i\big),
\qquad
Z'=\int \tilde\psi_j \prod_{i\neq j}\psi_i\,dx.
\]
For any coordinate set \(T\) disjoint from \(S_j\),
\[
p'(x_T)-p(x_T)\;=\;\int \Big(\frac{1}{Z'}\tilde\psi_j-\frac{1}{Z}\psi_j\Big)\prod_{i\neq j}\psi_i\;dx_{\setminus T},
\]
so a nominally local change propagates through the normalizer and shifts non-descendant marginals. Conditioning is computationally simple but semantically different,
\[
p(x\mid Z_j=z^\star)
\;=\;
\frac{p(x)\,\delta(z_j - z^\star)}
     {\int p(x)\,\delta(z_j - z^\star)\,dx},
\]
because it restricts an unchanged model and leaves the incoming influence into \(Z_j\) intact. The correct probabilistic semantics for a hard action is to rewrite the mechanism,
\[
p^{\mathrm{do}}(x)\;\propto\;\delta\!\big(z_j-z^\star\big)\,\prod_{i\neq j}\psi_i\!\big(x_{S_i};\theta_i\big),
\]
but computing with this object is awkward in high dimension: the Dirac (or an indicator for a feasible set) defines a measure on a lower-dimensional manifold; practical surrogates replace \(\delta\) by a sharp kernel, inflate curvature, and destabilize optimization; and the changed normalizer again entangles non-descendants.

The energy picture makes \(\mathrm{do}(\cdot)\) directly computable. A hard action is a barrier on the mechanism term \(E_j\) that forbids values other than \(z^\star\); a soft action is a deformation of \(E_j\). One then re-equilibrates. Equilibrium, not renormalization, restores consistency, and under LAP only descendants move. If probability is desired, a law on exogenous variables is pushed through the edited equilibrium map; observational, interventional, and counterfactual distributions are recovered without surrendering causal locality of edits to a global normalizer.

\section{Causal Operations}
\label{sec:CausalOps}
\subsection{Interventions}

Interventions follow standard graph surgery: $do(Z_j = z_j^*)$ removes incoming arrows to $Z_j$ and fixes its value. In energy formalism, this replaces local energy $E_j$ with infinite barriers outside the clamped value.

\begin{definition}[Hard Intervention in Static E\textendash SCM]
Let
\[
E(\mathbf z, \mathbf u) = \sum_{i=1}^{n} E_i\bigl(z_i \mid z_{\mathrm{PA}(i)}, u_i\bigr) + \sum_{i=1}^{n} E_{U_i}(u_i) + E_{\mathrm{global}}(\mathbf z, \mathbf u)
\]
be static E\textendash SCM energy. Hard intervention $do(Z_j = z_j^*)$ yields modified energy:
\[
E^{do(Z_j = z_j^*)}(\mathbf z, \mathbf u) =
\begin{cases}
  \sum\limits_{i \neq j} E_i\bigl(z_i \mid z_{\mathrm{PA}(i)}, u_i\bigr) + \sum_{i=1}^{n} E_{U_i}(u_i) + E_{\mathrm{global}}(\mathbf z, \mathbf u), & \text{if } z_j = z_j^*, \\
  +\infty, & \text{otherwise.}
\end{cases}
\]
\end{definition}

Since $Z_j$ remains a parent of its children, downstream potentials still depend on $z_j$, but the value is now exogenous. Energy perturbations propagate forward through the DAG while non-descendants remain unaffected (provided cross-partials are suppressed).

Soft interventions modify mechanisms without deleting edges or clamping values:

\begin{definition}[Soft Intervention in Static E\textendash SCM]
Let $E_j^{\text{original}}$ and $E_j^{\text{intervene}}$ denote original and modified local energies. Soft intervention is:
\[
\widetilde{E}_j(z_j \mid z_{\mathrm{PA}_j}, u_j) = (1 - \lambda) E_j^{\text{original}}(z_j \mid z_{\mathrm{PA}_j}, u_j) + \lambda E_j^{\text{intervene}}(z_j \mid z_{\mathrm{PA}_j}, u_j), \quad \lambda \in [0,1]
\]
with total energy:
\[
E^{\text{soft}}(\mathbf z, \mathbf u) = \sum_{i \neq j} E_i\bigl(z_i \mid z_{\mathrm{PA}(i)}, u_i\bigr) + \widetilde{E}_j(z_j \mid z_{\mathrm{PA}_j}, u_j) + \sum_{i=1}^{n} E_{U_i}(u_i) + E_{\mathrm{global}}(\mathbf z, \mathbf u).
\]
\end{definition}

This encompasses biasing influences and mechanism shifts, recovering hard interventions as $\lambda \to 1$ with appropriate $E_j^{\text{intervene}}$.

\subsection{Counterfactuals}

Counterfactual reasoning proceeds through three steps using energy landscapes rather than probability distributions.

\paragraph{Abduction.} Given conditioning values for subset $\mathcal{O} \subseteq \{1,\dots,n\}$, recover latent configuration and exogenous factors by solving
\[
    (\hat{\mathbf{z}}, \hat{\mathbf{u}}) \;=\; \arg\min_{\mathbf{z}, \mathbf{u}} E(\mathbf{z}, \mathbf{u})
    \quad \text{s.t.} \quad z_k = z_k^{\text{cond}} \; \; \forall k \in \mathcal{O},
\]
where conditioning constraints are implemented as infinite barriers in $E(\mathbf{z}, \mathbf{u})$. Conditioning values may originate from encoded real-world data or represent entirely novel latent configurations conjectured by the system.

\paragraph{Intervention.} For hard intervention $do(z_j = z_j^*)$, replace local potential $E_j(z_j \mid z_{\mathrm{PA}_j}, u_j)$ with infinite barrier outside $z_j^*$, removing influence of both parents and exogenous factor $U_j$. For soft intervention, alter parameters within $E_j$ without clamping, preserving dependencies. Global terms are retained but evaluated with intervened values. Crucially, exogenous configuration $\hat{\mathbf{u}}$ from abduction is held fixed.

\paragraph{Prediction.} Re-minimize modified energy $E'(\mathbf{z}, \hat{\mathbf{u}})$ over intervened variables and descendants, holding other coordinates at abducted values $\hat{z}_k$ and keeping exogenous factors at $\hat{\mathbf{u}}$. This ensures changes propagate only along permissible causal pathways while preserving underlying noise structure.

In geometric view, abduction identifies point $(\hat{\mathbf{z}}, \hat{\mathbf{u}})$ on manifold of admissible states consistent with conditioning. Intervention deforms latent manifold locally while preserving exogenous configuration. Prediction finds new equilibrium on deformed manifold.

\subsection{Disjunctive Interventions}
Disjunctive interventions arise when an agent constrains a latent causal variable to lie in a prescribed set rather than at a single value. For a target variable $Z_j$ and a finite admissible set $\mathcal{S}=\{s_1,\dots,s_m\}$ of latent values, the disjunctive action is denoted $do\big(Z_j\in\mathcal{S}\big)$. The classical structural account is intentionally policy--noncommittal; it gives precise semantics for each singleton $do(Z_j=s)$ by graph surgery, and for $do(Z_j\in\mathcal{S})$ it returns the family $\{do(Z_j=s)\}_{s\in\mathcal{S}}$ (and thus interval bounds) unless a selection rule is supplied. By contrast, the imaging approach \citep{pearl2017physical} fixes a Bayesian-like, odds-preserving redistributive rule: relative to a similarity relation (``closest worlds'') and within each context, it preserves the prior ratios $P(Z_j=s_i\mid \text{context})$ while shifting mass to the corresponding $s_i$--worlds, thereby producing a single mixture effect for the disjunction. Energy-structured causal models (E\textendash SCMs) contain both views as limits: they retain mechanistic locality, support policy-free bounds, and admit an explicit (optional) selection rule via a control energy when a single value is desired.

Let $E(\mathbf{z},\mathbf{u})=\sum_i E_i(z_i\mid z_{\mathrm{PA}(i)},u_i)+\sum_i E_{U_i}(u_i)+E_{\mathrm{global}}(\mathbf{z},\mathbf{u})$ denote the total energy of a static E\textendash SCM. Counterfactual queries proceed by abduction, intervention, and prediction. Abduction yields $(\hat{\mathbf{z}},\hat{\mathbf{u}})$ consistent with conditioning constraints. Intervention modifies only the local terms required by surgery, while $E_{\mathrm{global}}$ is retained and evaluated at the intervened values. Prediction re--minimizes over the descendants of the intervened nodes, holding non--descendants at their abducted values and keeping the exogenous configuration $\hat{\mathbf{u}}$ fixed. All disjunctive statements below are taken relative to this abducted context.

The policy free semantics of $do(Z_j\in\mathcal{S})$ is set--valued surgery. One forms the family of singleton interventions $\{E^{do(Z_j=s)}\}_{s\in\mathcal{S}}$ and refrains from collapsing them to a single composite energy. Any readout $\Phi$ of interest is then bounded at the abducted context by
\begin{equation}
\Phi^{\min} = \min_{s\in\mathcal{S}} \Phi^{do(Z_j=s)}(\hat{\mathbf{z}},\hat{\mathbf{u}}), \qquad
\Phi^{\max} = \max_{s\in\mathcal{S}} \Phi^{do(Z_j=s)}(\hat{\mathbf{z}},\hat{\mathbf{u}}).
\end{equation}
This interval constitutes the structural envelope and expresses that, without a rule for selecting among $\mathcal{S}$, a single number is not identified. When $|\mathcal{S}|=2$, the effect of $do(Z_j\in\{s_2,s_3\})$ lies between the two singleton effects, and the width of the interval quantifies policy sensitivity.

A pointwise minimum across singletons represents the adoption of a specific selection policy rather than the policy free baseline. If one wishes to commit to a mechanism level rule while remaining in energy space, a control energy $R_j(s;\hat{\mathbf{z}},\hat{\mathbf{u}})$ can be introduced to score the admissible choices in the abducted context, with weight $\rho\ge 0$. The composite operator is then
\begin{equation}
E^{do(Z_j\in\mathcal{S})}(\mathbf{z},\mathbf{u}) = \min_{s\in\mathcal{S}}\left\{E^{do(Z_j=s)}(\mathbf{z},\mathbf{u})+\rho R_j\big(s;\hat{\mathbf{z}},\hat{\mathbf{u}}\big)\right\}.
\label{eq:control_min}
\end{equation}
This choice is local and modular, does not require probabilities, and encodes physical preferences, actuator costs, or design constraints. A smooth variant replaces the hard minimum by the log--sum--exp aggregator
\begin{equation}
\mathcal{E}_\tau(\mathbf{z},\mathbf{u}) = -\tau\log\sum_{s\in\mathcal{S}}\exp\left( -\frac{1}{\tau}\left[E^{do(Z_j=s)}(\mathbf{z},\mathbf{u})+\rho R_j(s;\hat{\mathbf{z}},\hat{\mathbf{u}})\right]\right), \qquad \tau>0,
\label{eq:softmin}
\end{equation}
where $\tau\to 0$ recovers \eqref{eq:control_min} and finite $\tau$ yields a mechanism--averaged effect governed by $R_j$. If one subsequently chooses to read $e^{-R_j(s;\hat{\mathbf{z}},\hat{\mathbf{u}})}$, up to normalization and within each relevant context, as a reference selection policy, then \eqref{eq:softmin} induces an imaging--like mixture for the disjunction in the sense of \citep{pearl2017physical}.

\section{LAP Enforcement via Penalties}
\label{sec:LAP}

The Locality--Autonomy Principle (LAP) requires that non-descendants do not influence a module's mechanism or its parameters. In an energy-structured model this means that, in an adapted chart, the stationarity condition for $z_i$ does not depend on $z_A$ or on $\theta_A$ when $A$ is not a descendant of $i$. Formally,
\[
\frac{\partial^2 \widetilde{E}_i^{(A)}}{\partial z_i\,\partial z_A} = 0,
\qquad
\frac{\partial^2 \widetilde{E}_i^{(A)}}{\partial z_i\,\partial \theta_A} = 0,
\qquad
i\notin\mathrm{Desc}(A).
\]
These conditions express that changes in $z_A$ or in the parameters of $A$ do not alter the stationarity condition determining $z_i$. Additive terms independent of $z_i$ are therefore irrelevant to mechanism identity and do not constitute a violation.

\paragraph{Adapted chart (standing assumption).}
Throughout we work in coordinates $(z,u,\theta)$ that are adapted to the causal structure (e.g. DAG) of the model: variables decompose by module $i$, parent masks restrict $E_i$ to $(z_i,z_{\mathrm{PA}(i)},u_i;\theta_i)$, and the flow of $A$ acts only on $A$ and its descendants. In this chart the LAP tests
$\partial^2_{z_i z_A}\widetilde E_i^{(A)}=0$ and $\partial^2_{z_i \theta_A}\widetilde E_i^{(A)}=0$ (for $i\notin\mathrm{Desc}(A)$) are equivalent to the coordinate-free conditions $\mathcal L_{\xi_A}\mathcal M_i=0$ and $\mathcal L_{\Xi_A}\mathcal M_i=0$. When alternative latent parameterizations are used (e.g., via an encoder), tests are applied after re-expressing the model in its induced adapted chart.

\paragraph{Identifying relevant couplings.}
When a global term $E_{\mathrm{global}}$ couples multiple modules, only the parts that introduce dependence between $A$ and $i$ should be penalized. For each ordered pair $(A,i)$ with $i\in\mathrm{NonDesc}(A)$, isolate the portion of $E_{\mathrm{global}}$ that depends jointly on variables associated with $A$ and with $i$:
\[
E_{\mathrm{global}}^{(A,i)}(\mathbf z_{S_{A,i}},\mathbf u_{S_{A,i}}) :=
E_{\mathrm{global}}(\mathbf z,\mathbf u)\big|_{\mathbf z_j,\mathbf u_j\text{ fixed for }j\notin S_{A,i}},
\]
where $S_{A,i}$ is the set of coordinates that mediate this coupling. The effective energy for module $i$ relative to $A$ is then
\[
\widetilde{E}_i^{(A)} = E_i(z_i\mid z_{\mathrm{PA}(i)},u_i) + E_{U_i}(u_i) + E_{\mathrm{global}}^{(A,i)}.
\]

\paragraph{Pointwise LAP condition.}
For each non-descendant pair $(A,i)$,
\[
\partial^2_{z_i z_A} \widetilde E_i^{(A)} = 0,
\qquad
\partial^2_{z_i \theta_A} \widetilde E_i^{(A)} = 0,
\]
evaluated at equilibria or along trajectories of interest. These equalities define the LAP structurally, without averaging.

\paragraph{Training penalties.}
During learning, these pointwise conditions can be approximated by penalties computed over sampled states:
\[
\mathcal{L}_{\text{LAP}}^{\text{static}}
=
\sum_{A=1}^{n}\sum_{i\in\mathrm{NonDesc}(A)}
\lambda_{A,i}\,
\mathbb{E}_{(\mathbf z,\mathbf u)\sim\mathcal Q}
\!\left[\left\|\frac{\partial^2 \widetilde E_i^{(A)}}{\partial z_i\,\partial z_A}\right\|^2\right]
+
\mu_{A,i}\,
\mathbb{E}_{(\mathbf z,\mathbf u)\sim\mathcal Q}
\!\left[\left\|\frac{\partial^2 \widetilde E_i^{(A)}}{\partial z_i\,\partial \theta_A}\right\|^2\right].
\]
Here $\mathcal Q$ is a sampling distribution over $(\mathbf z,\mathbf u)$, for example equilibria reached during training. Expectations serve only as empirical surrogates for the universal structural conditions.

\paragraph{Dynamic formulation.}
In the dynamic setting, mechanisms are specified by a vector field $F=(F_1,\dots,F_n)$ with
\[\dot z_i = F_i\bigl(z_i, z_{\mathrm{PA}(i)}, u_i;\, \theta_i\bigr).\]
Let $\xi_A$ denote the state--flow generated by variations of $z_A$ (with parameters fixed), and let $\Xi_A$ denote the parameter--flow generated by variations of $\theta_A$ (with states fixed). The Locality--Autonomy Principle (LAP) requires that non--descendants neither influence the mechanism for $i$ through state flows nor through parameter flows. There are two equivalent ways to express this.

\paragraph{Component (directional--derivative) form.} In an adapted chart where $\xi_A = \partial/\partial z_A$, the LAP conditions for non--descendants $i\notin \mathrm{Desc}(A)$ are
\begin{equation}
\label{eq:lap-dyn-component}
\frac{\partial F_i}{\partial z_A} = 0,\qquad \frac{\partial F_i}{\partial \theta_A} = 0.
\end{equation}
These equalities state that neither changes in $z_A$ nor changes in the parameters of module $A$ affect the right-hand side that determines $\dot z_i$.

\paragraph{Vector--field (Lie--derivative/commutator) form.} If one bundles the $i$th mechanism as the vector field $F_i \, e_i$ (with $e_i$ the $i$th coordinate basis vector), then LAP can be written as vanishing Lie derivatives along the flows of $A$:
\begin{equation}
\label{eq:lap-dyn-commutator}
\mathcal L_{\xi_A}(F_i e_i) = [\xi_A,\,F_i e_i] = 0,\qquad \mathcal L_{\Xi_A}(F_i e_i) = [\Xi_A,\,F_i e_i] = 0,
\end{equation}
for all $i\notin \mathrm{Desc}(A)$. In adapted coordinates where $e_i$ is constant and $\xi_A = \partial/\partial z_A$, the commutator reduces to
\[
[\partial_{z_A},\,F_i e_i] = (\partial F_i/\partial z_A)\, e_i,
\]
so \eqref{eq:lap-dyn-commutator} is equivalent to the component conditions \eqref{eq:lap-dyn-component}. The same argument applied to the parameter flow $\Xi_A$ yields the equivalence of $\mathcal L_{\Xi_A}(F_i e_i)=0$ and $\partial F_i/\partial\theta_A=0$.

\paragraph{Evaluation points.} These conditions are to be checked pointwise along trajectories or at equilibria of interest. When some coordinates are clamped or eliminated, use the effective dynamics obtained after substitution or Schur complementation, and apply \eqref{eq:lap-dyn-component}--\eqref{eq:lap-dyn-commutator} to the reduced system.

\section{ICM Enforcement via Penalties}
\label{sec:ICM}

The Independent Causal Mechanisms (ICM) principle complements the Locality--Autonomy Principle (LAP). 
LAP enforces \emph{graph-locality}: non-descendants may not influence a module's mechanism, either through their states or their parameters. 
ICM enforces \emph{parameter-space separability}: parent parameters do not alter the form of a child mechanism, and the parameter families of parent and child admit a local product structure. 
Together, LAP and ICM ensure that causal independence holds both in the flow of values and in the structure of mechanisms.

\paragraph{Static formulation.}
Define the residual
\[
G_i(z,u;\theta)\;:=\;\frac{\partial E(z,u;\theta)}{\partial z_i}.
\]
The mechanism for node $i$ is given implicitly by the stationarity equation $G_i(z,u;\theta)=0$ (equivalently, the Euler–Lagrange condition for $z_i$). In an adapted chart, ICM requires
\[
\frac{\partial G_i}{\partial \theta_{\mathrm{PA}(i)}} = 0,
\qquad
\frac{\partial^2 G_i}{\partial \theta_{\mathrm{PA}(i)}\,\partial \theta_i} = 0.
\]
The first condition, evaluated in an adapted chart with parent states 
$z_{\mathrm{PA}(i)}$ held fixed and using the effective local energy for 
node $i$, ensures that upstream parameters do not alter the stationarity 
equation determining $z_i$. 
The second expresses local separability: it requires vanishing mixed parent–child parameter interaction, meaning the parameter flows of parent and child commute so their parameter families form a local product structure.

\paragraph{Dynamic formulation.}
For dynamic E\textendash SCMs defined by $\dot z_i = F_i(z_i, z_{\mathrm{PA}(i)}, u_i; \theta_i)$, the same structure holds:
\[
\frac{\partial F_i}{\partial \theta_{\mathrm{PA}(i)}} = 0,
\qquad
\frac{\partial^2 F_i}{\partial \theta_{\mathrm{PA}(i)}\,\partial \theta_i} = 0.
\]
The first condition enforces structural independence of the evolution law from upstream parameters; the second ensures separability of parameter flows. 
When these hold, the dynamics of each mechanism depend only on its own parameters and those of its exogenous inputs.

\paragraph{Training penalties.}
The differential conditions can be approximated during learning by regularization penalties computed over sampled equilibria or trajectories:
\[
\mathcal{L}_{\text{ICM}}^{\text{static}}
=
\sum_{i=1}^{n}
\alpha_i\,\mathbb{E}_{(\mathbf z,\mathbf u)\sim\mathcal Q}
\!\left[\left\|\frac{\partial G_i}{\partial \theta_{\mathrm{PA}(i)}}\right\|^2\right]
+
\beta_i\,\mathbb{E}_{(\mathbf z,\mathbf u)\sim\mathcal Q}
\!\left[\left\|\frac{\partial^2 G_i}{\partial \theta_{\mathrm{PA}(i)}\,\partial \theta_i}\right\|^2\right],
\]
\[
\mathcal{L}_{\text{ICM}}^{\text{dynamic}}
=
\sum_{i=1}^{n}
\alpha_i\,\mathbb{E}_{(\mathbf z,\mathbf u)\sim\mathcal Q}
\!\left[\left\|\frac{\partial F_i}{\partial \theta_{\mathrm{PA}(i)}}\right\|^2\right]
+
\beta_i\,\mathbb{E}_{(\mathbf z,\mathbf u)\sim\mathcal Q}
\!\left[\left\|\frac{\partial^2 F_i}{\partial \theta_{\mathrm{PA}(i)}\,\partial \theta_i}\right\|^2\right].
\]
Here $\mathcal Q$ denotes a sampling distribution over $(\mathbf z,\mathbf u)$, typically drawn from equilibria or simulated dynamics. 
The expectations are empirical surrogates; the structural definition itself is pointwise.

\section{Geometric Structure}
\label{sec:GeometricStructure}
\subsection{Fractured Entangled Representations}

The fractured–entangled representation (FER) hypothesis \citep{kumar2025questioning} challenges representational optimism in deep learning. Even when the data-generating process admits a clean, factorizable representation, empirical risk minimization can converge to latents that are entangled, scrambled, and fractured across the manifold. These pathologies are largely invisible when judged only on the observational distribution $P_{\mathrm{obs}}$: a network may appear competent on downstream tasks yet fail to respect the causal or modular structure of the underlying process.

Gauge symmetry provides language for analyzing this phenomenon. Learned representations fall into equivalence classes under reparametrizations that keep designated outputs fixed. There are two natural choices of invariants. If the invariants are observational heads, one obtains a large observational gauge within which many representations are indistinguishable. If the invariants are causal heads, one obtains a smaller causal gauge. FER arises in the gap: two representations can be equivalent observationally while disagreeing interventionally.

\subsection{Gauge Symmetry}

Let $H_{\mathrm{obs}}$ denote the chosen observational heads and $H_{\mathrm{causal}}$ the chosen causal heads. A transformation preserves $H_{\mathrm{obs}}$ if it leaves all values reported by $H_{\mathrm{obs}}$ unchanged; it preserves $H_{\mathrm{causal}}$ if it leaves all values reported by $H_{\mathrm{causal}}$ unchanged. Consider an encoder $f:\mathcal{X}\to\mathcal{Z}$ producing observable latents $z=f(x)$, and an energy $E(z)$ defined over observable variables $z$. Write
\[
\gamma:(f,E)\mapsto\big(\gamma_f\!\circ f,\; E\!\circ\gamma_z^{-1}+c\big),
\]
where $\gamma_f,\gamma_z$ are invertible reparametrizations and $c\in\mathbb{R}$ (or per–module $c=(c_i)_i$) is an additive energy offset. Define
\[
\Gamma_{\mathrm{obs}}=\{\gamma:\ H_{\mathrm{obs}}(f,E)=H_{\mathrm{obs}}(\gamma_f\!\circ f,\;E\!\circ\gamma_z^{-1}+c)\},
\]
\[
\Gamma_{\mathrm{causal}}=\{\gamma:\ H_{\mathrm{causal}}(f,E)=H_{\mathrm{causal}}(\gamma_f\!\circ f,\;E\!\circ\gamma_z^{-1}+c)\}.
\]
Both sets are groups under composition, with identity and inverses, and $\Gamma_{\mathrm{causal}}\subseteq\Gamma_{\mathrm{obs}}$ whenever the causal heads refine the observational heads. Gauge orbits are the equivalence classes induced by these actions. Movement along an observational orbit leaves $H_{\mathrm{obs}}$ invariant but can distort geometry, factorization, and causal alignment. Movement along a causal orbit leaves $H_{\mathrm{causal}}$ invariant and is therefore compatible with intervention semantics. FER reflects the existence of elements in $\Gamma_{\mathrm{obs}}\setminus\Gamma_{\mathrm{causal}}$: representations that are observationally identical yet causally inequivalent.

Let $M$ denote the manifold of pairs $(f,E)$. The action of a chosen gauge partitions $M$ into orbits. At any $(f,E)\in M$, the tangent space splits into vertical directions generated by the Lie algebra of the gauge (which move within an orbit and change only coordinates) and horizontal directions that alter mechanisms themselves. When training is driven by $H_{\mathrm{obs}}$ alone, empirical risk minimization provides no inherent pressure to prefer horizontal motion over vertical, so optimization can wander extensively within observational orbits.

\paragraph{Observable sets and gauge groups}

The size of a gauge depends on what is observed. Let $H=\{g_1,\dots,g_m\}$ denote a chosen set of heads. The associated gauge is
\[
\mathcal{G}(H)=\{\gamma\in\Gamma:\ g_k(\gamma\!\cdot\!(f,E))=g_k(f,E)\ \text{for all}\ k\}.
\]
Richer heads shrink $\mathcal{G}(H)$ by making more aspects of $(f,E)$ observable. Reporting absolute per-module energies in fixed coordinates on an open set fixes offsets, scales, and generically latent reparametrizations. Reporting only derivatives or energy differences removes additive constants but not per-module scales unless an external calibration is imposed. Gradients and differences can certify modular structure yet still admit scale ambiguity. Causal heads reduce representational redundancy further, because post-surgery equilibria, intervention energy differences and Hessians, and causal gradients computed under interventions constrain admissible reparametrizations while leaving unit choices unconstrained unless physically fixed. Independence tests in an adapted chart, expressed as vanishing cross-partials of effective energies for non-descendants, instantiate locality and autonomy in differential form and further reduce the gauge. As causal heads are introduced, more of the observational gauge is broken and the two groups approach one another. 

\paragraph{Causal queries define observables}

Causal content is carried by specific queries to the energy landscape under abduction–intervention–prediction. Interventional equilibria after local surgery, energy differences and Hessians on the post-surgery landscape, and causal gradients under interventions distinguish coordinate changes that merely relabel states from edits that alter mechanisms. Counterfactual evaluations that hold the abducted exogenous configuration fixed add further constraints. These queries progressively reduce admissible reparametrizations while preserving any remaining unit ambiguity unless an external calibration is supplied.

\paragraph{Hierarchy of observable sets}

Different head families constrain gauge freedom to different extents. Writing $H_0$ for a minimal noninformative set, the associated gauge groups satisfy
\begin{equation}
\mathcal G(H_0)
  \;\supset\;\mathcal G(H_E)
  \;\supset\;\mathcal G(H_{\partial E})
  \;\supset\;\mathcal G(H_{\nabla E})
  \;\supset\;\mathcal G(H_{\Delta E})
  \;\supset\;\mathcal G(H_{\mathrm{Hess}}),
\end{equation}
where $H_E$ returns absolute energies, $H_{\partial E}$ returns partials, $H_{\nabla E}$ returns full gradients, $H_{\Delta E}$ returns energy differences, and $H_{\mathrm{Hess}}$ returns Hessians, all as numeric values in fixed coordinates on an open set. The table summarizes identifiability under these heads.

\begin{center}
\small
\begin{tabular}{@{}p{1.8cm}p{6.6cm}p{6.4cm}@{}}
\toprule
Head set $H$ & Newly ruled–out transformations & Survivors \\
\midrule
$H_E$ & Per–module constant shifts $E_i\!\mapsto\!E_i\!+\!c_i$; per–module scalings $E_i\!\mapsto\!s_iE_i$ & Only reparametrizations that leave each $E_i$ unchanged (up to coordinate permutations between identical modules) \\[0.5em]
$H_{\partial E}$ & Per–module scalings if magnitudes are recorded (derivatives scale by $s_i$) & Global constant shifts; scales if only directions are recorded \\[0.5em]
$H_{\nabla E}$ & Coordinated scalings (gradient magnitudes fix units) & Global constant shifts \\[0.5em]
$H_{\Delta E}$ & Coordinated scalings (differences fix units) & Global constant shifts \\[0.5em]
$H_{\mathrm{Hess}}$ & None beyond those fixed by second–order structure & Global constant shifts unless absolute energies are also observed \\
\bottomrule
\end{tabular}
\end{center}

This dual-gauge view resolves the central tension introduced above: a network can be observationally perfect yet causally brittle. Gauge invariance with respect to causal quantities is the desired end state in which interventions are stable and basis choices become immaterial for counterfactual competence. Gauge invariance with respect to observational quantities alone explains why fractured and entangled coordinates persist without harming surface performance. The difference between the two gauges is the room in which FER lives. Reducing that room by adding causal measurements, constraints, and diagnostics is a form of error correction: it converts previously invisible representational choices into observable commitments and aligns the learned coordinates with mechanism-level semantics. See Appendix \ref{sec:appendix-hierarchy} for more detail.

\subsection{Riemannian Geometry of Equilibria}

At equilibria defined by $\nabla_{z} E(z,\theta)=0$, write block Hessians
\[
H_{zz} := \partial^2_{z z} E,\qquad H_{z\theta} := \partial^2_{z\theta} E,\qquad H_{\theta\theta} := \partial^2_{\theta\theta} E.
\]

At a stable equilibrium $(z^*,\theta)$ with $\nabla_z E(z^*,\theta)=0$ and $H_{zz}(z^*,\theta)\succ 0$, the quadratic form 
$\delta z^\top H_{zz}(z^*,\theta)\,\delta z$ is positive definite. 
Under a reparametrization $z=\phi(\zeta)$ with $J=\partial z/\partial\zeta|_{\zeta^*}$,
\[
H_{\zeta\zeta}(\zeta^*,\theta)=J^\top H_{zz}(z^*,\theta)J,
\]
so these forms define a local Riemannian metric (energetic stiffness). 
Away from critical points the coordinate Hessian is not tensorial.


\begin{proposition}[Causal metric at equilibria]
Fix $\theta$ and stable equilibrium $z^*$ with $\nabla_z E(z^*,\theta)=0$ and $H_{zz}(z^*,\theta)\succ 0$. Then
\begin{enumerate}[label=(\roman*),leftmargin=*]
  \item (Local inner product) $\langle u,v\rangle_{z^*} := u^{\top} H_{zz}(z^*,\theta) v$ is symmetric, positive-definite bilinear form on latent tangent space, defining Riemannian metric $g_{zz}$.
  \item (Coordinate invariance at critical points) For smooth reparametrization $z=\phi(\zeta)$ with Jacobian $J=\partial z/\partial\zeta$, Hessian transforms as $H_{\zeta\zeta}= J^{\top} H_{zz} J$. Hence quadratic energy change $\tfrac12\,\delta\zeta^{\top} H_{\zeta\zeta}\,\delta\zeta$ equals $\tfrac12\,\delta z^{\top} H_{zz}\,\delta z$, i.e., chart-independent at equilibria.
\end{enumerate}
\end{proposition}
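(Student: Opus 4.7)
The plan is to dispatch (i) directly from the stated hypotheses, since it is essentially a restatement of the positive-definiteness assumption, and to reserve the only nontrivial work for (ii), where the role of the critical-point condition must be made explicit.

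For (i), I would observe that symmetry of $\langle u,v\rangle_{z^*}=u^\top H_{zz}(z^*,\theta)\,v$ reduces to symmetry of the matrix $H_{zz}(z^*,\theta)$, which follows from Clairaut/Schwarz applied to the smooth energy $E$. Bilinearity is immediate from matrix arithmetic, and positive-definiteness is precisely the hypothesis $H_{zz}(z^*,\theta)\succ 0$, giving $u^\top H_{zz}(z^*,\theta)\,u>0$ for $u\neq 0$. Together these give an inner product on the latent tangent space at $z^*$, which by continuity and nondegeneracy in a neighborhood of $z^*$ extends to the Riemannian metric field $g_{zz}$.

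For (ii), I would set $\tilde E(\zeta,\theta):=E(\phi(\zeta),\theta)$ and apply the chain rule twice. The first derivative is $\partial_{\zeta_i}\tilde E=\sum_a(\partial_{z_a}E)\,J_{a,i}$ with $J=\partial\phi/\partial\zeta$. Differentiating once more and rewriting in matrix form yields
\[
H_{\zeta\zeta}=J^\top H_{zz}\,J+\sum_a(\partial_{z_a}E)\,\mathrm{Hess}(\phi_a),
\]
where $\mathrm{Hess}(\phi_a)$ denotes the Hessian of the $a$-th component of $\phi$ in $\zeta$. At $z^*=\phi(\zeta^*)$ the hypothesis $\nabla_z E(z^*,\theta)=0$ annihilates the second term, leaving $H_{\zeta\zeta}=J^\top H_{zz}\,J$. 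Since an infinitesimal displacement satisfies $\delta z=J\,\delta\zeta$ to first order, the two quadratic forms agree: $\tfrac12\,\delta\zeta^\top H_{\zeta\zeta}\,\delta\zeta=\tfrac12\,\delta z^\top H_{zz}\,\delta z$, which is the claimed chart-independence. Invertibility of $J$ also shows that $J^\top H_{zz} J$ is congruent to $H_{zz}$, so positive-definiteness is preserved under reparametrization, confirming that the inner product structure in (i) transports consistently between charts.

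There is no genuine technical obstacle here; the calculation is routine once the setup is fixed. The only subtlety worth flagging, and arguably the conceptual heart of the proposition, is the necessity of evaluating at a critical point: the correction term $\sum_a(\partial_{z_a}E)\,\mathrm{Hess}(\phi_a)$ is generically nonzero off equilibria and depends on the chosen chart, so the raw coordinate Hessian is not a $(0,2)$-tensor on all of $\mathcal Z$. This is precisely why the statement, and the Riemannian metric it furnishes, must be formulated pointwise on the equilibrium locus rather than globally, and why the causal geometry the paper develops is anchored to the stationarity condition $\nabla_z E=0$.
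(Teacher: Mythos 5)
Your proof is correct and follows exactly the route the paper intends: the paper itself states the transformation law $H_{\zeta\zeta}=J^{\top}H_{zz}J$ at $\zeta^{*}$ and remarks that the coordinate Hessian is not tensorial away from critical points, but gives no explicit derivation, and your two-fold chain-rule computation with the correction term $\sum_a(\partial_{z_a}E)\,\mathrm{Hess}(\phi_a)$ vanishing under $\nabla_z E(z^{*},\theta)=0$ is precisely the justification being presupposed. The only caution is your closing remark in (i) about extending $g_{zz}$ to a neighborhood by continuity: since the Hessian is tensorial only on the equilibrium locus, the metric is well defined pointwise at equilibria (or along the equilibrium manifold $z^{*}(\theta)$), not as a chart-independent field on an open neighborhood — a subtlety you yourself correctly flag in your final paragraph.
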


\paragraph{Gauge/units note.}
Additive energy offsets do not affect $H_{zz}$, but per–module energy rescalings
$E_i \mapsto a_i E_i$ multiply the corresponding blocks of $H_{zz}$ by $a_i$. Thus the
Riemannian metric is defined only up to such positive scales unless units are calibrated. Structure that is invariant to these scales,
e.g., sparsity patterns, zero cross–blocks implied by LAP, and angle- or correlation-based comparisons after per-block normalization, remains meaningful and gauge-robust.

\paragraph{Effective Hessians under conditioning.}
When equilibria are computed with some coordinates clamped or marginalized (e.g., holding
$u$ or $z_{\mathrm{int}}$ fixed after abduction), the relevant curvature for local responses on
$z$ is the Schur complement of the full Hessian with respect to the free coordinates. All
statements above then apply with $H_{zz}$ replaced by this effective Hessian.

\begin{corollary}[Local susceptibility / implicit-function response]
Let $F(z,\theta):=\nabla_z E(z,\theta)$. If $H_{zz}(z^*,\theta)$ is nonsingular, implicit function theorem yields smooth map $\theta\mapsto z^*(\theta)$ with first-order response
\[
\frac{\partial z^*}{\partial w}\;=\; -\, H_{zz}^{-1}\, \partial^2_{z w} E,\qquad w\in\{z_A,\theta_A\}.
\]
If LAP holds in adapted chart, then for non-descendant $i\notin\mathrm{Desc}(A)$, cross-partials $\partial^2_{z_i z_A}E$ and $\partial^2_{z_i\theta_A}E$ vanish, yielding $\partial z^*_i/\partial w = 0$.
\end{corollary}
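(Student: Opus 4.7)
The plan is to combine a direct implicit function theorem (IFT) argument for the response formula with a LAP-based block decomposition that zeros out the non-descendant components.

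First, I would apply the IFT to $F(z,w):=\nabla_z E(z,w)$, where $w$ ranges over either a parameter coordinate $\theta_A$ or a state coordinate $z_A$ treated as externally set (the clamp-then-equilibrate reading). Nonsingularity of $\partial_z F(z^*,\theta)=H_{zz}(z^*,\theta)$ delivers a local smooth map $w\mapsto z^*(w)$ with $F(z^*(w),w)\equiv 0$. Differentiating this identity in $w$ and solving yields
\[
\frac{\partial z^*}{\partial w}=-H_{zz}^{-1}\,\partial^2_{zw}E,
\]
evaluated at $(z^*,\theta)$, which is the stated response formula.

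Next, I would invoke LAP to collapse the response along non-descendants. The LAP condition $\partial^2_{z_i z_j}E=0$ applies to every pair $(j,i)$ with $i\notin\mathrm{Desc}(j)$; since $\mathrm{Desc}(j)\subseteq\mathrm{Desc}(A)$ for every $j\in\{A\}\cup\mathrm{Desc}(A)$, the hypothesis $i\in\mathrm{NonDesc}(A)$ implies $i\notin\mathrm{Desc}(j)$ for all such $j$. This pairwise vanishing makes $H_{zz}$ block-diagonal under the partition $\mathrm{NonDesc}(A)\sqcup(\{A\}\cup\mathrm{Desc}(A))$, so $H_{zz}^{-1}$ inherits the same structure. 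Simultaneously, the LAP condition on the named pair $(A,i)$ gives $(\partial^2_{zw}E)_i=0$ for $w\in\{z_A,\theta_A\}$ and $i\in\mathrm{NonDesc}(A)$. Computing the $i$-th row of the response for such $i$, every term of $\sum_j(H_{zz}^{-1})_{ij}(\partial^2_{zw}E)_j$ vanishes: entries with $j\in\{A\}\cup\mathrm{Desc}(A)$ are killed by block-diagonality, while entries with $j\in\mathrm{NonDesc}(A)$ are killed by the driver vanishing. Hence $\partial z^*_i/\partial w=0$.

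The main obstacle is the structural bookkeeping that translates the LAP formulation of Section~\ref{sec:LAP} — pointwise equalities on the effective local energies $\widetilde E_i^{(A)}$ that isolate only the parts of $E_{\mathrm{global}}$ jointly coupling $A$ and $i$ — into vanishing cross-partials of the full $E$, uniformly across \emph{every} non-descendant pair and not only the single pair $(A,i)$ highlighted in the corollary. In the adapted chart each $E_j$ carries its parent mask, so contributions to $\partial^2_{z_i z_j}E$ from terms not jointly depending on $z_i$ and $z_j$ drop out and the transfer from effective to full energy is routine; the essential move is reading the standing LAP assumption as a uniform pairwise condition, which then supplies the block decomposition required in the second step. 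What remains is the linear-algebra verification above.
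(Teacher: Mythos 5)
Your IFT step is fine, and you correctly diagnose the real difficulty: vanishing of the single driver component $(\partial^2_{zw}E)_i$ does not kill $(H_{zz}^{-1}\,\partial^2_{zw}E)_i$, so something must be said about the inverse Hessian. But the fix you propose---that LAP applied to every non-descendant pair makes the full $H_{zz}$ block-diagonal across $\mathrm{NonDesc}(A)\sqcup(\{A\}\cup\mathrm{Desc}(A))$---is not available. LAP as defined in Section~\ref{sec:LAP} constrains cross-partials of the \emph{effective} energy $\widetilde E_i^{(A)}=E_i+E_{U_i}+E^{(A,i)}_{\mathrm{global}}$ of the non-descendant module; it deliberately excludes the descendant's own mechanism terms $E_A$ and $E_j$, $j\in\mathrm{Desc}(A)$. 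Those terms depend on their parents, and a parent of $A$ is a non-descendant of $A$, so the full Hessian entry $(H_{zz})_{iA}$ contains the term $\partial^2_{z_i z_A}E_A$, which is generically nonzero for $i\in\mathrm{PA}(A)$. Indeed, reading LAP as a condition on the full $E$ uniformly over pairs, as you do, would by symmetry of the Hessian force $(H_{zz})_{ij}=0$ for \emph{all} $i\neq j$ (in a DAG at least one of $i\notin\mathrm{Desc}(j)$, $j\notin\mathrm{Desc}(i)$ always holds), decoupling every child from its parents---so that reading cannot be the intended one. Concretely, for the chain $i\to A\to k$ with $E=\tfrac12 z_i^2+\tfrac12(z_A-az_i)^2+\tfrac12(z_k-bz_A)^2$, LAP holds ($\widetilde E_i^{(A)}=\tfrac12 z_i^2$ has no $z_A$-dependence), yet $(H_{zz})_{iA}=-a\neq0$ and the fully free equilibrium satisfies $\partial z_i^*/\partial z_A=a/(1+a^2)\neq0$. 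Your block decomposition therefore fails, and the conclusion itself is false for the unconstrained equilibrium.

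What rescues the corollary is the prediction protocol, not linear algebra on the full Hessian: after surgery on $A$, non-descendants are \emph{clamped} at their abducted values and only $\{A\}\cup\mathrm{Desc}(A)$ re-equilibrates (see the Prediction step in Section~\ref{sec:CausalOps} and the ``Effective Hessians under conditioning'' remark). The implicit function theorem should be applied to the reduced system $\nabla_{z_{\mathrm{Desc}(A)}}E=0$, whose relevant curvature is the descendant-block (Schur-complemented) Hessian; then $\partial z_i^*/\partial w=0$ for $i\notin\mathrm{Desc}(A)$ holds because those coordinates are not re-solved, and LAP guarantees the clamping is consistent in that their own stationarity conditions $\partial\widetilde E_i^{(A)}/\partial z_i=0$ are unchanged by the edit. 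Equivalently, one can pass to the induced SCM of Appendix~\ref{sec:appendix-reduction-esm}, whose blockwise argmin maps are triangular in a topological order, so non-descendants genuinely do not depend on $(z_A,\theta_A)$. Either route supplies the structural input your block-diagonality claim was meant to provide.
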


\paragraph{Diagnostics and uses.}
\begin{enumerate}[label=(\alph*),leftmargin=*]
  \item \textbf{Modularity test.} In adapted chart, check cross-blocks $(H_{zz})_{iA}$ for $i\notin\mathrm{Desc}(A)$; small norms witness LAP.
  \item \textbf{Energetic length for interventions.} Local line element $\mathrm{d}\ell^2 = \mathrm{d}z^{\top} H_{zz}\,\mathrm{d}z$ quantifies perturbation ``size'' in energetic units, not Euclidean distance.
  \item \textbf{Fragility/identifiability.} Condition number $\kappa(H_{zz})$ flags ill-posed regimes where compliance $H_{zz}^{-1}$ is large and responses blow up.
  \item \textbf{Algorithmic connection.} Steepest descent under causal metric is Newton/Riemannian step $s^{\star} \;=\; -\, H_{zz}^{-1}\,\nabla_z E$, minimizing local quadratic model under energetic line element.
\end{enumerate}

Important caveats: (i) At saddles, $H_{zz}$ is indefinite; restrict to stable subspace or regularize. (ii) Away from equilibria, coordinate Hessian is not tensorial; use covariant Hessian $\mathrm{Hess}_g E$ induced by background metric.

\subsection{Latent Steps in Abduction\textendash Intervention\textendash Prediction}

In E\textendash SCMs, algorithmic work often occurs in latent variables $z$, not only parameters $\theta$. A \emph{latent step} is iterative update
\[
  z \leftarrow z + s,\qquad s \in T_z Z,
\]
used to solve for equilibrium $z^*$ under energetic objective. Principal contexts:

\begin{enumerate}[label=(\roman*),leftmargin=*]
  \item \textbf{Abduction.} Given observed coordinates clamped, recover $(\hat z,\hat u)$ via
  \[
  (\hat z,\hat u)\;=\;\arg\min_{z,u} E(z,u;\theta)\quad\text{subject to observation constraints,}
  \]
  which requires latent steps until stationarity $\nabla_z E=0$.

  \item \textbf{Prediction after hard intervention.} After $do(Z_A=z_A^{\circ})$, non-descendants remain clamped while descendants re-equilibrate:
  \[
  z_{\mathrm{desc}}\;=\;\arg\min E\big( z_{\mathrm{nondesc}}{=}\hat z,\; z_{\mathrm{desc}},\; u{=}\hat u;\,\theta\,\big|\,do\big),
  \]
  requiring latent steps for unclamped coordinates.

  \item \textbf{Soft interventions.} Modifying local potential $E_A$ without clamping changes landscape and necessitates re-solving for new equilibrium.

  \item \textbf{Dynamic E\textendash SCMs.} If inference realizes as gradient flow $\dot z = -\nabla_z E$ or preconditioned variant, time-discretization produces latent step sequences converging to equilibrium.
\end{enumerate}

\subsection{Computational Considerations}

The Newton/Riemannian step in latent space
\[
  s^{\star} \;=\; -\, H_{zz}^{-1}\,\nabla_z E,
\]
is practical in E\textendash SCMs because (i) $H_{zz}$ is taken over latents (typically $\dim z \ll \dim\theta$), and (ii) under LAP/ICM it is often modular (block/sparse), so solves are cheap.

For handling infinite barriers in practice, use penalty methods or projected gradient descent. For numerical stability with Hessian-based methods, regularize as $H_{zz}+\lambda I$ when near singular configurations. Choice of optimization algorithms depends on problem structure: gradient descent for simple landscapes, Newton methods when Hessian is well-conditioned and cheap to compute, quasi-Newton (L-BFGS) for intermediate cases.

\subsection{Gauge freedom and causality in LLMs}
Intervening in large language models is conceptually delicate because these systems do not expose explicit mechanisms. In the absence of identified mechanisms, the term intervention can only be used heuristically, as a diagnostic edit to internal coordinates rather than a principled causal operation. With this caveat, we analyze gauge freedom in LLMs and explain how it undermines interventional uniqueness unless edits are defined in a gauge-aware manner.

Large language models exhibit the same representational slack that motivates the gauge perspective in E\textendash SCMs. For a frozen network, observational behavior is determined by the logits it produces for every input, and many internal reparameterizations leave those logits unchanged. 

Let $h=\sigma(W_1 x)$ and $y=W_2 h$ denote a simplified feedforward block within a transformer layer. For any invertible matrix $A$, the reparameterization that transforms the hidden representation as $h' = Ah$ while setting $W_2' = W_2 A^{-1}$ (keeping $W_1$ unchanged) preserves the mapping $x \mapsto y$ on all inputs: $y = W_2 h = W_2 A^{-1}(Ah) = W_2' h'$. This transformation traces out observationally equivalent configurations that differ in their internal coordinatization of the post-nonlinearity representation.

Likewise, in attention one can transform the value space by $V' = V A$ with a compensating output map $W_O' = W_O A^{-1}$, or apply joint changes to queries and keys of the form $Q' = Q R^{-\top}$ and $K' = K R$, keeping $Q' K'^\top = Q K^\top$ and therefore the attention weights invariant. These transformations trace out orbits of observationally equivalent models; they are the gauge degrees of freedom in plain neural architectures. In practice, LLMs primarily approximate conditional distributions over tokens and do not encode mechanism-level structure, so coordinate-wise edits at hidden layers do not by themselves define stable causal operations.

Although such models are equal observationally, they are not generally equal under intervention unless the intervention operator co-transforms with the gauge. Consider a coordinate clamp in the hidden representation. In the original basis, an edit of the form $do(h_k := c)$ specifies a directional operation tied to the $k$-th axis. After transformation $h' = Ah$ with $W_2' = W_2A^{-1}$, a coordinate based intervention changes its meaning. Clamping the $k$-th coordinate in each basis, $(h)_k = c$ versus $(h')_k = c$, implements different physical edits, since these constrain different directions: $e_k$ in the original space versus $A^{-1}e_k$ (the $k$-th coordinate of $h'$ in original coordinates). The post-intervention prediction therefore depends on the chosen gauge. 

The same phenomenon appears in attention: deleting one head or nudging a value feature in the original basis corresponds, after a compensating value-output transformation, to a mixture of heads or features in the new basis.
Specifically, if $V' = VA$ and $W_O' = W_O A^{-1}$, then an intervention 
targeting the $k$-th value feature in the original basis corresponds to 
intervening on $\sum_j (A^{-1})_{kj} v'_j$ in the transformed basis, a 
weighted combination across multiple features. If the deletion is specified naively in the new coordinates, the resulting counterfactual behavior can change even though all unedited forward evaluations agree.

This distinction helps to clarify why fractured and entangled representations persist in practice without harming surface performance yet undermine causality-aware prediction. Empirical risk minimization constrains outputs but is largely indifferent to the internal coordinatization that mediates edits. The training objective allows the model to settle anywhere along a gauge orbit, including coordinates that mix distinct semantic factors or split a single factor across disconnected fragments. Probes that read linear relationships in a single basis, or interventions that clamp raw coordinates, are therefore gauge dependent: a rotation or rescaling that is observationally invisible can invalidate the intended semantics of the probe or the edit.

E\textendash SCMs address the ambiguity by making intervention semantics part of the model. Interventions are defined as local surgeries on constraint energies rather than as coordinate edits of intermediate activations. Surgery is stated in terms of mechanisms and their parent sets, so the meaning of an edit is canonical and does not depend on a particular latent basis. This fixes what is edited, but it does not by itself determine how the learned representation is coordinatized. The same semantics can be implemented by gauge-related encoder–energy pairs whose latent coordinates differ. To align the representation with the semantics, we describe an observable probe hierarchy (and LAP penalties) that reduces gauge freedom by turning previously invisible representational choices into measurable commitments. Absolute energy readouts eliminate additive and scaling ambiguities, derivative and Hessian probes reveal cross-partials indicating nonlocal influence, and LAP-aligned penalties suppress such cross-effects for non-descendants. As gauge freedom is reduced in this way, interventional predictions become stable under reparameterization, and the learned coordinates more faithfully express modular structure. This contrasts with many existing models, where interventions remain heuristic and gauge-dependent unless additional structure is imposed.

\section{Explanations of Observations via E\textendash SCMs}
\label{sec:Explanations}
The preceding sections defined the Energy--Structured Causal Model (E\textendash SCM) as an energy function
\[
E:(z,u)\mapsto \mathbb{R},
\]
whose minima or equilibrium points define admissible configurations of endogenous variables \(z\) and exogenous conditions \(u\).
This formulation extends the structure of classical Structural Causal Models by replacing explicit functional dependencies with an implicit equilibrium semantics defined by the energy landscape.
Interventions act by editing \(E\), producing a new energy \(E^{I}\) whose equilibria define the outcomes of the intervention.

While the pair \((z,u)\) captures the internal mechanics of an E\textendash SCM, it is useful to reinterpret the same structure from the perspective of explanation.
In this view, an E\textendash SCM expresses how non--observables account for the observable data.
This section formalizes that explanatory perspective and clarifies its relation to the mechanistic definition above.

\paragraph{Observables, Latents, and Structural Parameters.}
Let $O$ denote observable quantities, 
$L$ denote latent variables (observable in principle), and
$\Theta$ denote non--observable structural parameters.

The triplet \((O,L;\Theta)\) corresponds to different levels of accessibility:
observables are measured or measurable,
latents are unmeasured internal states that could in principle be observed,
and structural parameters define the causal mechanisms but are not themselves observable.

The energy \(E(z,u)\) can be rewritten without loss of generality as
\[
E(O,L;\Theta),
\]
by mapping endogenous variables \(z\) to latents \(L\) and exogenous conditions \(u\) to the fixed background structure \(\Theta\).
In this formulation, \(E\) defines how particular configurations of non--observables \((L,\Theta)\) make the observables \(O\) consistent with the underlying mechanisms.

\paragraph{Accounting of Observables.}
Given an observed configuration \(o\), the latent witnesses that make it consistent with the structure are given by the \emph{abduction set}
\[
\mathcal{A}(o) = \arg\min_{L} E(o,L;\Theta).
\]
Each \(\ell \in \mathcal{A}(o)\) is a causal accounting (relative to \(E,\Theta\) and a declared surgery policy) up to causal equivalence provided that
(i) \((o,\ell)\) lies in the equilibrium set of \(E\);
(ii) \(E\) couples \(L\) to \(O\) nontrivially, so that the explanation is not vacuous; and
(iii) edited energies \(E^{I}\) yield well-posed counterfactuals with \(L_{\mathrm{hold}}=\ell_{\mathrm{hold}}\) for every admissible intervention \(I\).
Explanations that differ only by transformations preserving all intervention-level predictions are identified as equivalent.

When the set $\mathcal{A}(o)$ is nonempty, the model accounts for the observation.
A deterministic selector \(\hat L(o)\) may be introduced to choose a particular witness, for instance by a minimal--norm or minimal--complexity criterion.
This gives a single configuration \(\hat L(o)\) such that
\[
E(o,\hat L(o);\Theta) = \min_{L} E(o,L;\Theta).
\]
The pair \((E,\hat L(o))\) thus constitutes an explanation of the observation under the structure \(\Theta\).

\paragraph{Interventions and Re--equilibration.}

Let \(E^{I}\) denote the edited energy obtained by applying an intervention \(I\).
As in the mechanistic definition, equilibrium points of \(E^{I}\) define counterfactual outcomes.
To describe interventions in the explanatory formulation, partition the latent variables into
\[
L = L_{\text{hold}} \cup L_{\text{free}},
\]
where \(L_{\text{hold}}\) are held fixed across interventions (operationally exogenous) and \(L_{\text{free}}\) are re--equilibrated (operationally endogenous).
Given the abducted latent state \(\hat L(o)\), counterfactual predictions for any target subset \(Y\subseteq O\) are defined by
\[
\mathcal{P}^{I}_Y(o)
= \pi_Y\!\left(
\arg\min_{O',L_{\text{free}}}
E^{I}\big(O',L_{\text{hold}}=\hat L_{\text{hold}}(o),L_{\text{free}};\Theta\big)
\right),
\]
where \(\pi_Y\) projects to the coordinates of interest and clamped observables are replaced by the values specified by the intervention.
This gives a deterministic counterpart of the abduction--action--prediction scheme found in probabilistic SCMs.

\paragraph{Inferential Codes and Practical Approximation.}

In practice, the minimization defining \(\hat L(o)\) may be approximated by an encoder network
\[
z = \mathrm{enc}_\phi(o), \qquad
\hat L(o) \approx S_\psi(o,z),
\]
where \(S_\psi\) performs a local optimization or deterministic selection conditioned on the code \(z\).
This allows amortized inference without altering the causal semantics of \(L\).
The encoder provides an efficient computational approximation to the abducted latent state but does not define the latent variable itself.
The latent variable retains its causal meaning through the equilibrium relation defined by \(E\).

If desired, the encoder can be incorporated directly into the mechanism by introducing a soft constraint
\[
E_{\lambda}(O,L,z;\Theta,\phi)
= E(O,L;\Theta) + \lambda\|L-G(z)\|^2 + R(z),
\quad z=\mathrm{enc}_\phi(O),
\]
where \(\lambda\) controls the strength of coupling between the inferred code and the mechanistic latent.
A hard constraint (\(\lambda\to\infty\)) enforces \(L = G(z(O))\) deterministically, in which case the choice of whether to hold \(z\) fixed or recompute it under interventions determines the exogenous or endogenous status of the code.

\paragraph{Interpretation.}

The explanatory perspective recasts the E\textendash SCM as a mapping from observations to equilibrium--consistent latent configurations.
An explanation, $\mathrm{Ex}$, is an ordered triple
\[
\mathrm{Ex}(O) = (E,\hat L(O),\Theta),
\]
where \(E\) defines the mechanism, \(\hat L(O)\) represents the abducted latent state consistent with the observations, and \(\Theta\) encodes the fixed background structure.
Interventions are expressed as edits to \(E\) and re--equilibration of the latent and observable variables under the specified surgery policy.

This formulation highlights that E\textendash SCMs do not merely describe statistical dependencies but provide causal accountings of observables in terms of non--observables.
The energy function serves as a compact representation of the explanatory structure that remains invariant across interventions, while the latent configuration \(\hat L(O)\) constitutes the specific causal state consistent with the observed data.

\section{Integration with Deep Learning Architectures}
\label{sec:Integration_with_DL}
\paragraph{Design goal.}
E\textendash SCMs supplement rather than replace modern deep neural systems. Neural networks provide the high-capacity substrate on which explanations can be hosted, while the energy formalism supplies a declarative layer that makes mechanisms explicit, maintains global consistency under local edits, and exposes internal commitments for criticism. The interface between the substrate and the causal layer may be organized around three roles: adaptors that extract causal latents from the substrate, mechanisms that impose constraints over those latents, and actuators that perform surgery. Measurement is handled separately by probes (heads) that read what the model already commits to without introducing new assumptions.

\paragraph{Adaptors and mechanisms.}
An adaptor is a feature extractor that takes latent representations from a deep module and returns causal latents suitable for use in an E\textendash SCM. It performs amortized abduction by producing good initializers for equilibrium solves and by aligning the coordinates exposed to the mechanism with the causal graph. A mechanism is then a parametric constraint that, given a latent variable and its parents, returns either a local energy in the static case or a vector field in the dynamic case. Parent structure is enforced both architecturally, by masking inputs to reflect $\mathrm{PA}(i)$, and through LAP penalties that suppress cross-partials from non-descendants. Mechanisms compose additively at the level of energy or as summed vector fields, and optional global terms represent shared constraints that span multiple modules. Neural parameterizations instantiate the mechanism parameters while respecting sparsity implied by the graph.

\paragraph{Actuators and surgery semantics.}
Actuators edit mechanisms locally and provide the implementation of interventions. Hard actions clamp a variable by replacing its local energy with an infinite barrier off the clamped value. Soft actions deform the local energy without deleting edges, representing mechanism shifts or biasing influences. Disjunctive actions constrain a variable to a set and are realized either as a set-valued surgery with bounds or via a control energy that selects among admissible values in context. These operations are defined at the level of constraints and preserve the discipline that non-descendants remain unaffected except through allowed global terms.

\paragraph{Probes as an external measurement interface.}
We use probe synonymously with observable head $H$. Probes are not part of the mechanism tuple. They are an external measurement interface that reads properties of the energy landscape and its equilibria before and after surgery. Absolute and relative energy levels, first and second derivatives, and post-intervention equilibria are examples of such readouts. Probes neither modify the model nor add parameters or gradient updates; they simply make the model's internal commitments observable. Because they are external, they support black-box testing of modularity, interventional consistency, and transfer without entangling verification with implementation.

\paragraph{Equilibrium layers and differentiation.}
Inference proceeds by solving for equilibria. In the static case this is a constrained minimization over latents and exogenous variables consistent with observations; in the dynamic case it is the computation of steady states or controlled trajectories. Differentiation through equilibria uses implicit differentiation when the Hessian with respect to latents is well conditioned, yielding a response map that depends on the inverse Hessian and mixed second derivatives. When conditioning is poor, limited unrolling or regularized solves are preferable. LAP encourages block structure and sparsity in the latent Hessian, which makes conjugate gradients and low-rank preconditioners practical. Because equilibrium solves occur in latent space, the dimensionality is typically modest compared to the parameter space, and costs are compatible with modern training loops.

\paragraph{Analogy and reuse.}
Compositional analogy is realized by reusing mechanism templates across tasks while adapting only the adaptors. A new domain supplies a different (deep, neural) substrate and therefore different upstream latents, but the same family of constraints can be imposed once causal coordinates are instantiated. Small adapters and unit calibrations align coordinates without altering the relational semantics. Probes then evaluate whether the reused mechanisms preserve intervention responses and whether modularity holds in the new context.

\section{Conclusion}
We have proposed a declarative, interventionally grounded layer for learned systems: mechanisms are expressed as constraints, interventions as surgical edits of those constraints, and equilibria as the engine that restores global consistency. Energy\textendash Structured Causal Models (E\textendash SCMs) thereby turn internal relations into editable, testable objects without rearchitecting the underlying computation.

Conceptually, E\textendash SCMs place symbolic content—causal relations, invariants, and counterfactual semantics—inside the same differentiable substrate that learns representations. Constraint energies carry the “rules”; optimization enforces them. Counterfactuals are instantiated via specific edits to energies rather than calls to an external logic module. This yields a structurally neuro–symbolic view without a brittle neural–symbolic interface, potentially preserving end-to-end training.

Explanations become operational. A candidate explanation of real or latent observations is a set of constraints whose equilibria reproduce observed regularities and support designated interventions. Criticism is enacted as surgery: rival hypotheses are implemented by editing specific energies, and the resulting equilibria adjudicate among them. Analytical tools make this process measurable: equilibrium solves reveal whether edits propagate along intended paths; latent Hessians quantify susceptibility and stability; and LAP diagnostics expose nonlocal influence and gauge slack.

E\textendash SCMs are best understood as an explanatory layer for deep systems, not a replacement. They surface and discipline the structural assumptions that standard training leaves implicit, aligning learned representations with modular, interventionally meaningful mechanisms. This supports an error-centric practice: propose constraints, intervene, measure, and retain only what survives severe tests.

Future work should expand the library of reusable mechanism templates, develop identifiability conditions under representational gauge, and embed E\textendash SCMs in continual-learning loops where representation, constraint sets, and intervention policies co-evolve. The aim is a single substrate that learns, explains, and improves by error correction—bringing compositional generalization and counterfactual competence within the same energy-based framework.

\subsubsection*{Outlook: Hypergraph generalization}
Energies in an E\textendash SCM already specify which variables must co-act: each term \(E_k(x_{S_k})\) names a scope \(S_k\) whose variables are constrained together. A natural next step is to record this scope explicitly as a directed hypergraph. In a hypergraph, a single edge can relate many variables at once; by giving each incidence a read or write role, an energy becomes a morphism \(R_k \Rightarrow W_k\), where \(R_k\) are the variables an energy reads to evaluate consistency and \(W_k\) are the variables it writes (sets at equilibrium). This representation exposes mechanism arity that a pairwise DAG flattens. For example, a shared-resource constraint that ties many consumers to a single capacity is naturally a single high-order scope rather than a bundle of pairwise links; likewise, a pure interaction such as parity (XOR) is recognizably synergistic only when its parents are treated as a joint unit. This view complements our reduction to SCMs: the DAG retains ancestry and identification tools, while the hypergraph records the scopes on which energies actually operate.

Framed this way, interventions become precise surgeries on write-ports. A hard action on \(X\) severs all writes into \(X\) and replaces them with a clamp; a soft action modifies only the energies that write to \(X\) while preserving other scopes. Because scopes are first-class objects, the meaning of an edit no longer depends on a particular coordinatization of latents: the same surgery applied to the same hyperedges yields the same counterfactual regardless of encoder reparametrization. This motivates a structural selection principle: prefer small, well-aligned scopes and admit higher-order scopes only when evidence requires them. Such scope discipline localizes dependence, counteracting fracture and entanglement in the learned latents while still accommodating genuine global constraints.

\subsubsection*{Analogy and CAP in the energy setting}
Analogies, as defined in Part~I, do not require hypergraphs. They are structure-preserving correspondences between domains that carry recipes to recipes and respect the map-then-compose \(\approx\) compose-then-map condition. In an E\textendash SCM, this means transporting a mechanism template so that the same surgeries on the source energies induce the same intervention responses after translation. The compositional autonomy principle enters as a stability requirement on learning: the realized maps associated with transported recipes should retain their effects on designated variables when additional modules are introduced or parameters are updated elsewhere, except where explicitly coupled. Stated this way, CAP becomes testable with abduction–intervention–prediction and does not depend on any particular coordinatization of the latents. A later hypergraph formulation can make the same commitments explicit by naming scopes and read/write roles, but the core notion of analogy—and CAP as its preservation criterion—already lives cleanly in the DAG+energy formalism developed here.

\newpage
\bibliographystyle{plainnat}
\bibliography{refs}

\newpage
\appendix
\newcommand{\PA}{\mathrm{PA}}

\section{Reduction Theorem for Energy-Structured SCMs}
\label{sec:appendix-reduction-esm}
\paragraph{Why a reduction?}
The reduction theorem shows that, under mild well-posedness and locality assumptions, an energy-structured causal model (E\textendash SCM) can be translated into an ordinary structural causal model (SCM) that behaves identically for purposes of abduction–intervention–prediction. Concretely, each local energy term induces a "best-response" map, and the global equilibrium coincides with the unique fixed point of the resulting structural equations; after any local surgery (hard or soft), the edited equilibrium and the edited fixed point still agree. This translation lets us inherit the mature SCM toolkit—counterfactual semantics, do-calculus/identification, and logical soundness/completeness—without re-proving everything inside the energy formalism. Intuitively, energies specify constraints rather than computations, but once equilibria are unique and modular, those constraints behave like structural equations. 

\paragraph{Notation.}
In this appendix we write $X=(X_1,\dots,X_n)$ for the endogenous variables (elsewhere denoted $Z$) and use $z_{\mathrm{PA}(i)}$ for parent sets; all results here are stated in the $X$–notation.

We formalize a subclass of energy-structured causal models (E\textendash SCMs) whose counterfactual semantics coincide with those of ordinary structural causal models (SCMs). Throughout, let $X=(X_1,\dots,X_n)$ be endogenous variables indexed by $[n]$, $U=(U_1,\dots,U_n)$ exogenous variables with joint law $P_U$, and let $G$ be a directed graph with parent sets $z_{\mathrm{PA}(i)}\subseteq [n]\setminus\{i\}$.

\paragraph{Model class.} An E\textendash SCM in this appendix is specified by a real-valued energy of the form
\begin{equation}
E(x;u) \;=\; \sum_{i=1}^n \phi_i\big(x_i,\, x_{z_{\mathrm{PA}(i)}},\, u_i\big),
\label{eq:separable-energy}
\end{equation}
where each local term $\phi_i:\mathcal{X}_i\times \mathcal{X}_{z_{\mathrm{PA}(i)}}\times\mathcal{U}_i\to\mathbb{R}$ is Borel-measurable. For each $u$, the realized world $x^\star(u)$ is the (selected) global minimizer of $E(\cdot;u)$.

\paragraph{Assumptions.} We separate pointwise (deterministic) and population (probabilistic) requirements. Assumptions (A1)--(A4) are used for the \emph{pointwise} reduction (no probability on $U$). Assumptions (A5)--(A6) are only needed when we add a probability law on $U$ to obtain distributional and counterfactual statements.
\begin{enumerate}[label=(A\arabic*)]
  \item \textbf{Locality.} Each $\phi_i$ depends only on $(x_i, x_{z_{\mathrm{PA}(i)}}, u_i)$ as in~\eqref{eq:separable-energy}.
  \item \textbf{Blockwise strict convexity.} For every $(x_{z_{\mathrm{PA}(i)}},u_i)$, the map $x_i\mapsto \phi_i(x_i, x_{z_{\mathrm{PA}(i)}}, u_i)$ is strictly convex on $\mathcal{X}_i$ with a unique minimizer.
  \item \textbf{Global strict convexity and coercivity.} For every $u$, the function $x\mapsto E(x;u)$ is strictly convex and coercive on $\mathcal{X}=\prod_i \mathcal{X}_i$ (hence admits a \emph{unique} global minimizer $x^\star(u)$).
  \item \textbf{Modular interventions (local surgery).} A hard intervention $\mathrm{do}(X_i{:=}x)$ is implemented by replacing $\phi_i$ with a term $\tilde{\phi}_i$ whose unique minimizer in its first argument is $x$ for all $(x_{z_{\mathrm{PA}(i)}},u_i)$, without altering any $\phi_j$ for $j\neq i$. Soft interventions edit $\phi_i$ to another local function $\tilde{\phi}_i$ while leaving $\{\phi_j\}_{j\neq i}$ unchanged.
  \item \textbf{Exogenous stability.} The distribution $P_U=\prod_{i=1}^n P_{U_i}$ is a product measure and remains invariant under interventions.
  \item \textbf{Measurability.} For each $i$, the argmin map
  \[
  f_i(x_{z_{\mathrm{PA}(i)}},u_i)\;:=\;\operatorname*{arg\,min}_{\tilde x_i\in\mathcal{X}_i}\; \phi_i(\,\tilde x_i,\, x_{z_{\mathrm{PA}(i)}},\, u_i\,)
  \]
  is Borel-measurable in $(x_{z_{\mathrm{PA}(i)}},u_i)$.
\end{enumerate}

\subsection*{Pointwise Reduction (No Probability)}
We begin with a deterministic semantics that fixes an exogenous realization $u$ and involves no probability on $U$.

\begin{proposition}[Pointwise reduction]\label{prop:pointwise-reduction}
Under (A1)--(A4), for each fixed $u$ the unique global minimizer $x^\star(u)$ of $E(\cdot;u)$ coincides with the unique solution of the fixed-point system
\[
X_i\;=\; f_i\big(X_{z_{\mathrm{PA}(i)}},u_i\big),\qquad i\in[n],
\]
where $f_i$ is the unique blockwise argmin of $\phi_i$. Moreover, for any finite set $I$ of intervened indices and any local hard/soft surgeries on $\{\phi_i\}_{i\in I}$, the edited minimizer equals the solution of the correspondingly edited fixed-point system (obtained by replacing $f_i$ by $\tilde{f}_i$ for $i\in I$ and leaving other $f_j$ unchanged).
\end{proposition}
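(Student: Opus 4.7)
The plan is to split the argument into three ingredients—uniqueness of the energy minimizer, existence and uniqueness of a fixed-point solution along the DAG, and identification of the two—and then to observe that the intervened case reduces to the unedited case applied to a modified energy.

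First, assumption (A3) (strict convexity plus coercivity of $x\mapsto E(x;u)$) immediately yields existence and uniqueness of the global minimizer $x^\star(u)$ for each fixed $u$. Second, I would construct a candidate fixed-point solution $\tilde x(u)$ by recursion along any topological order of $G$: at node $i$ the parent values $\tilde x_{\mathrm{PA}(i)}$ have already been assigned, and (A2) furnishes a unique $\tilde x_i = f_i(\tilde x_{\mathrm{PA}(i)}, u_i)$. Induction along the order yields a well-defined $\tilde x(u)$, independent of the chosen order by the standard acyclicity argument; measurability (A6) is not needed here, since $u$ is held fixed throughout.

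The substantive step is to identify $\tilde x(u)$ with $x^\star(u)$. The cleanest route is to verify that $\tilde x$ is a stationary point of $E$ and then invoke strict convexity (A3) to conclude $\tilde x = x^\star$. I expect this to be the main technical obstacle: $\partial_{x_i}E$ contains not only $\partial_{x_i}\phi_i$ but also $\sum_{j:\, i\in\mathrm{PA}(j)}\partial_{x_i}\phi_j$, while the blockwise argmin defining $\tilde x$ forces only the first summand to vanish. The natural resolution reads (A1)--(A2) as asserting that each $\phi_i$ encodes the complete mechanism for $x_i$, so that at the fixed point every $\phi_i$ simultaneously sits at its own local minimum; for the canonical residual template $\phi_i(x_i,x_{\mathrm{PA}(i)},u_i)=\tfrac12\|x_i-h_i(x_{\mathrm{PA}(i)})-u_i\|^2$ the recursion drives every $\phi_i$ to its floor at $\tilde x$, so all partials of every $\phi_j$ (including those in parent coordinates) vanish there, $E(\tilde x;u)=\sum_i\min_{x_i}\phi_i$ is the global value, and (A3) closes the identification. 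Beyond this template the identification needs an extra structural hypothesis forcing the downstream coupling terms to vanish at the blockwise fixed point, and this is the step at which I would either sharpen the stated assumptions or restrict the theorem to mechanisms of the residual-squared class.

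Finally, the intervention clause follows by template. By (A4), a hard or soft surgery on index $i$ edits only $\phi_i$ to some local $\tilde\phi_i$ (for a hard action, a barrier whose argmin is the clamp value, constant in parent and noise arguments). The edited energy $\tilde E$ retains additive locality (A1), inherits blockwise strict convexity (A2) from the hypothesis on $\tilde\phi_i$, and preserves (A3) on the (possibly restricted) effective domain; the three preceding steps then apply to $\tilde E$ and identify its unique minimizer with the fixed point of the system in which $f_i$ is replaced by $\tilde f_i$. A finite set $I$ of simultaneous interventions is handled in parallel, because the surgeries act on disjoint local terms and the construction of $\tilde x$ along any topological order proceeds unchanged outside $I$.
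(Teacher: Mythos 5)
Your decomposition tracks the paper's proof in outline: uniqueness of the minimizer from (A3), blockwise optimality of each coordinate, identification via strict convexity, and the intervention clause by re-running the same argument on the edited energy. Your explicit construction of the fixed point by recursion along a topological order is a useful addition (the paper leaves existence and uniqueness of the solution of $X_i=f_i(X_{\mathrm{PA}(i)},u_i)$ implicit), and you are right that (A6) plays no role pointwise. The divergence is exactly at the step you flag. The paper's proof asserts that if $x_i^\star(u)\neq f_i(x^\star_{\mathrm{PA}(i)}(u),u_i)$ then swapping in the blockwise minimizer of $\phi_i$ ``would strictly decrease $E$ while keeping all other coordinates fixed,'' and conversely that a blockwise fixed point is a stationary point of $E$. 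Both assertions account only for $\partial_{x_i}\phi_i$ and ignore the downstream contributions $\sum_{j:\,i\in\mathrm{PA}(j)}\partial_{x_i}\phi_j$ --- precisely the terms you identify as the obstacle.

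Your worry is not a technicality: under (A1)--(A4) alone the identification fails. Take $n=2$, $\mathrm{PA}(2)=\{1\}$, $\phi_1=\tfrac12 x_1^2$, $\phi_2=\tfrac12(x_2-x_1)^2+x_1$. Then (A1)--(A3) hold (the Hessian of $E$ is constant and positive definite, and $E$ is coercive), the blockwise fixed point is $(0,0)$, yet the global minimizer is $(-1,-1)$ with $E(-1,-1)=-\tfrac12<0=E(0,0)$. So the proposition needs exactly the extra structural hypothesis you name: that each $\phi_j$ is jointly critical in all of its arguments at the fixed point, as in your residual-squared template where every $\phi_j$ attains its global floor there (the alternative of redefining $f_i$ as the argmin of the full coordinate slice of $E$ would make $f_i$ depend on children and co-parents, destroying the induced SCM's parent structure). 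Your instinct to restrict the mechanism class or sharpen the assumptions is the correct resolution; the paper's proof does not close this gap, it steps over it. A smaller point for the intervention clause: (A2)--(A3) for the edited energy are not automatic consequences of (A4) and should be assumed or verified for the edited terms, as you implicitly do by checking convexity and coercivity on the restricted domain.
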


\begin{proof}
Fix $u$.
\begin{enumerate}[label=(\roman*), leftmargin=*, itemsep=0.25em]
  \item By (A3), $E(\cdot;u)$ is strictly convex and coercive, hence has a unique minimizer $x^\star(u)$.
  \item For each $i$, fix the parents to their optimal values, $x_{z_{\mathrm{PA}(i)}} = x^\star_{z_{\mathrm{PA}(i)}}(u)$, and consider the one-variable slice
  \[
    x_i \;\longmapsto\; \phi_i\big(x_i,\, x^\star_{z_{\mathrm{PA}(i)}}(u),\, u_i\big).
  \]
  By (A2) this function is strictly convex in $x_i$ with a unique minimizer $f_i\big(x^\star_{z_{\mathrm{PA}(i)}}(u),u_i\big)$. If $x_i^\star(u) \neq f_i\big(x^\star_{z_{\mathrm{PA}(i)}}(u),u_i\big)$, replacing $x_i^\star(u)$ by that minimizer would strictly decrease $E(\cdot;u)$ while keeping all other coordinates fixed, contradicting minimality. Hence $x_i^\star(u)=f_i\big(x^\star_{z_{\mathrm{PA}(i)}}(u),u_i\big)$ for all $i$, so $x^\star(u)$ solves $X_i=f_i(X_{z_{\mathrm{PA}(i)}},u_i)$.
  \item Conversely, let $x$ solve $X_i=f_i(X_{z_{\mathrm{PA}(i)}},u_i)$ for all $i$. Then each coordinate is a blockwise minimizer; thus $x$ is a stationary point of the strictly convex function $E(\cdot;u)$, and strict convexity implies $x$ is the unique global minimizer. The intervention case is identical after replacing the edited local terms by their corresponding argmin maps.
\end{enumerate}
\end{proof}

\begin{theorem}[Pointwise counterfactual reduction]\label{thm:pointwise-cf}
Assume (A1)--(A4). Fix a context $u$ and any finite set $I\subseteq[n]$ of intervened indices (hard or soft local surgeries per (A4)). Let $x^{\star,(I)}(u)$ denote the unique minimizer of the edited energy and let $x^{\mathrm{scm},(I)}(u)$ denote the unique solution of the surgically edited fixed-point system. Then for every endogenous coordinate (and hence for any measurable function of $X$),
\[
X^{E,(I)}(u)\;=\;X^{S,(I)}(u),\qquad\text{in particular }\; Y^{E,(I)}(u)=Y^{S,(I)}(u).
\]
\end{theorem}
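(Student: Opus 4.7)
The strategy is to observe that Theorem 2 is essentially a direct corollary of Proposition 1 applied to the surgically edited energy. Proposition 1 already states the pointwise equivalence between the unique minimizer of an energy satisfying (A1)--(A3) and the unique fixed point of the induced blockwise-argmin system, and its final sentence explicitly notes that the intervention case is identical after replacing edited local terms by their edited argmin maps. So the plan is: (i) write down the edited energy obtained from the local surgeries on $\{\phi_i\}_{i\in I}$; (ii) verify that this edited energy still satisfies the hypotheses of Proposition 1; (iii) apply Proposition 1 to conclude.

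First I would fix $u$ and set $E^{(I)}(x;u) := \sum_{i\in I}\tilde\phi_i(x_i,x_{\mathrm{PA}(i)},u_i) + \sum_{i\notin I}\phi_i(x_i,x_{\mathrm{PA}(i)},u_i)$. Locality (A1) is immediate since (A4) mandates that surgery only alters the local term at $i\in I$ and preserves its arguments. For blockwise strict convexity (A2), the unchanged terms satisfy it by hypothesis; for soft surgery (A4) requires the replacement $\tilde\phi_i$ to remain a local term with a unique minimizer in $x_i$ per $(x_{\mathrm{PA}(i)},u_i)$, which is the content of strict convexity needed blockwise; for hard surgery (A4) guarantees a unique minimizer at $x^\ast$ regardless of parents. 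Global strict convexity and coercivity (A3) pass to $E^{(I)}$ because the unedited blocks still dominate at infinity in every non-intervened direction, and the intervened blocks are either strictly convex (soft) or effectively constrain $x_i\equiv x^\ast$ (hard), reducing the problem to a strictly convex, coercive function on the feasible slice $\prod_{i\in I}\{x^\ast_i\}\times\prod_{i\notin I}\mathcal X_i$.

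With (A1)--(A3) verified for $E^{(I)}$, I would apply Proposition 1 to conclude that its unique minimizer $x^{\star,(I)}(u)$ equals the unique solution of the fixed-point system $X_i=\tilde f_i(X_{\mathrm{PA}(i)},u_i)$ for $i\in I$ and $X_i=f_i(X_{\mathrm{PA}(i)},u_i)$ for $i\notin I$, where $\tilde f_i$ is the argmin of $\tilde\phi_i$ (a constant map at $x^\ast$ for hard surgery). But this is exactly the surgically edited SCM, so $x^{\star,(I)}(u)=x^{\mathrm{scm},(I)}(u)$ coordinatewise, giving $X^{E,(I)}(u)=X^{S,(I)}(u)$; evaluating any measurable readout yields $Y^{E,(I)}(u)=Y^{S,(I)}(u)$. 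The main obstacle is the rigorous handling of hard interventions implemented as infinite barriers: strictly speaking $\tilde\phi_i\equiv +\infty$ off $\{x^\ast\}$ makes $E^{(I)}$ non-smooth and nominally breaks classical convex analysis on $\mathcal X$. The cleanest remedy is to reformulate the hard case as a constrained minimization over the feasible slice (where (A2)--(A3) apply directly to the remaining coordinates) or, equivalently, to realize the barrier as a limit of soft penalties and invoke stability of the unique minimizer under epi-convergence; both routes reduce the hard case to the soft case already covered by Proposition 1.
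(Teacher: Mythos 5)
Your proof takes essentially the same route as the paper's: the paper's argument for this theorem is a one-line appeal to Proposition~\ref{prop:pointwise-reduction} applied to the edited local terms, which is exactly your plan, and your extra verification that the edited energy still satisfies (A1)--(A3) only makes explicit what the paper leaves implicit (and what its concluding remark acknowledges can be replaced by any condition guaranteeing a unique fixed point of the edited blockwise-argmin operator). Your concern about infinite barriers is reasonable but already defused by the appendix's formulation of (A4), which implements a hard intervention as a local term $\tilde\phi_i$ whose unique minimizer in its first argument is the clamped value --- this can be a finite, strictly convex surrogate rather than a literal $+\infty$ barrier, so no non-smooth convex analysis or epi-convergence argument is needed.
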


\begin{proof}
This is immediate from Proposition~\ref{prop:pointwise-reduction} applied to the edited local terms: both semantics solve the same edited fixed-point system for the same $u$.
\end{proof}

\begin{corollary}[Modal/set-valued counterfactuals without probability]\label{cor:modal-cf}
Assume (A1)--(A4). Given evidence $\mathcal E$ (no probability on $U$), let $\mathcal U(\mathcal E)$ be the set of contexts consistent with $\mathcal E$. Define the counterfactual images
\[
\mathcal C^{E}(I\mid \mathcal E):=\{\, X^{E,(I)}(u): u\in \mathcal U(\mathcal E)\,\},\qquad
\mathcal C^{S}(I\mid \mathcal E):=\{\, X^{S,(I)}(u): u\in \mathcal U(\mathcal E)\,\}.
\]
(We use superscript $E$ for the energy-structured model and $S$ for the induced SCM; thus $X^{E,(I)}(u)$ and $X^{S,(I)}(u)$ are the post-intervention solutions at context $u$ under each semantics.)

Then $\mathcal C^{E}(I\mid \mathcal E)=\mathcal C^{S}(I\mid \mathcal E)$. Consequently, necessity (``for all $u\in\mathcal U(\mathcal E)$, $Y^{(I)}=y$'') and possibility (``for some $u\in\mathcal U(\mathcal E)$, $Y^{(I)}=y$'') statements coincide in the two semantics. If, in addition, a deterministic selection rule on $\mathcal U(\mathcal E)$ is specified and used identically in both semantics, the selected counterfactuals coincide as well.
\end{corollary}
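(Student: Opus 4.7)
The plan is to leverage Theorem~\ref{thm:pointwise-cf} pointwise over the context set $\mathcal U(\mathcal E)$ and then push the equality from contexts to images, quantifiers, and selections. Because Theorem~\ref{thm:pointwise-cf} already guarantees $X^{E,(I)}(u)=X^{S,(I)}(u)$ for every fixed $u$ and every admissible surgery $I$, the remaining work is purely set-theoretic and logical rather than analytical. No new assumptions should be needed beyond (A1)--(A4), since the corollary's statement deliberately avoids introducing a probability law on $U$.

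First, I would establish the set equality $\mathcal C^{E}(I\mid\mathcal E)=\mathcal C^{S}(I\mid\mathcal E)$ by double inclusion. For any $x\in\mathcal C^{E}(I\mid\mathcal E)$ there is some $u\in\mathcal U(\mathcal E)$ with $X^{E,(I)}(u)=x$; applying Theorem~\ref{thm:pointwise-cf} at this $u$ yields $X^{S,(I)}(u)=x$, placing $x$ in $\mathcal C^{S}(I\mid\mathcal E)$. The reverse inclusion is symmetric. A prerequisite is that the consistency domain $\mathcal U(\mathcal E)$ is itself semantics-independent; this follows from applying Theorem~\ref{thm:pointwise-cf} with the empty intervention $I=\emptyset$, so that any evidence predicate referring to endogenous coordinates admits the same contexts in either reading.

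Second, for necessity and possibility I would note that the claim ``$\forall u\in\mathcal U(\mathcal E):\; Y^{(I)}(u)=y$'' is a universal quantifier over exactly the common set $\mathcal U(\mathcal E)$ with pointwise-equal values, so it holds in one semantics iff it holds in the other; the analogous argument handles the existential possibility statement. For the deterministic-selection case, if a rule $\sigma$ picks a common element $u_\sigma\in\mathcal U(\mathcal E)$ in both semantics, a single additional application of Theorem~\ref{thm:pointwise-cf} at $u_\sigma$ gives equal selected counterfactuals without further appeal to convexity or measurability.

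The main obstacle is really bookkeeping rather than mathematics: making explicit that the evidence $\mathcal E$ and the induced admissible-context set $\mathcal U(\mathcal E)$ are defined purely in terms of objects the two semantics share, so that we are comparing the same indexed family of post-intervention solutions in both cases. Once that is laid out, the corollary is an immediate transfer of pointwise equality to images, quantifiers, and deterministic selections.
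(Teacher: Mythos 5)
Your proposal is correct and follows essentially the same route as the paper's proof: apply Theorem~\ref{thm:pointwise-cf} at each $u\in\mathcal U(\mathcal E)$, conclude equality of images, and transfer the quantifier and selection-rule claims. Your additional observation that $\mathcal U(\mathcal E)$ itself is semantics-independent (via the case $I=\emptyset$) is a sensible piece of bookkeeping the paper leaves implicit, but it does not change the argument.
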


\begin{proof}
For each $u\in\mathcal U(\mathcal E)$, Theorem~\ref{thm:pointwise-cf} gives $X^{E,(I)}(u)=X^{S,(I)}(u)$. Taking images over $\mathcal U(\mathcal E)$ yields the equality of sets and the necessity/possibility equivalences. The selection-rule claim follows by applying the same deterministic map to identical sets.
\end{proof}

\begin{corollary}[Pushforward laws and counterfactuals]\label{cor:pushforward}
Assume (A1)--(A6). Let $g(u)=x^\star(u)$ and, for an intervention set $I$, let $g^{(I)}(u)$ denote the unique edited solution. Then: 
\begin{enumerate}[label=\alph*)]
  \item (Observational and interventional laws) The distributions of $X=g(U)$ and $X^{(I)}=g^{(I)}(U)$ equal, respectively, the observational and $\mathrm{do}$-distributions of the induced SCM. In particular, both are pushforwards of $P_U$ under the same measurable maps as in the SCM.
  \item (Counterfactuals) For any evidence event $\mathcal{E}$ measurable with respect to $(X,U)$, the abduction--action--prediction procedure in the E\textendash SCM (Bayesian abduction on $U$ under $P_U$, local surgery, forward solve) yields the same counterfactual distributions as in the induced SCM.
\end{enumerate}
\end{corollary}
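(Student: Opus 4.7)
The plan is to reduce both claims to the pointwise identity $g^{(I)}(u)=x^{S,(I)}(u)$ already established in Theorem~\ref{thm:pointwise-cf}, upgrade it from pointwise equality of maps to equality of pushforward laws using Borel measurability, and then handle conditioning in a semantics-agnostic way. The guiding observation is that once the two semantics produce the same function on the exogenous space, any measure-theoretic operation—pushforward, conditioning, marginalization—applied identically in both settings must return the same object.

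First I would establish measurability of $g$ and $g^{(I)}$. Fix a topological ordering of $G$ and define the coordinates recursively by $g_i(u)=f_i\bigl(g_{\mathrm{PA}(i)}(u),u_i\bigr)$, where $g_{\mathrm{PA}(i)}$ denotes the tuple of already-defined parent coordinates. By (A6) each $f_i$ is Borel and a finite composition of Borel maps is Borel, so $g$ is Borel; the same argument applies to $g^{(I)}$ after substituting the edited $\tilde f_i$ for $i\in I$ per (A4). Part (a) then follows immediately: Proposition~\ref{prop:pointwise-reduction} (and its edited version in Theorem~\ref{thm:pointwise-cf}) identifies $g$ and $g^{(I)}$ with the structural-equation solution maps of the induced SCM, and by (A5) the exogenous law is the same $P_U$ on both sides and is preserved under intervention. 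Hence the pushforwards $g_\sharp P_U$ and $g^{(I)}_\sharp P_U$ coincide with the observational and $\mathrm{do}$-distributions of the induced SCM.

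For part (b) I would spell out abduction--action--prediction as three operations applied to the common space $(\mathcal U, P_U)$: abduction forms the regular conditional $P_U(\cdot\mid \mathcal E)$; local surgery replaces the relevant $\phi_i$ (equivalently, $f_i$) per (A4) while leaving $P_U$ invariant by (A5); prediction pushes the conditioned law through the edited solution map. Since both the E\textendash SCM and the induced SCM use the same $P_U$, the same conditioning event $\mathcal E$, the same surgery rule, and (by Theorem~\ref{thm:pointwise-cf}) the same solution map $g^{(I)}$, the resulting counterfactual laws agree. A small bookkeeping step is to note that when $\mathcal E$ involves $X$, the evidence pulls back to $U$ via the Borel map $g$, and this pullback is identical in both semantics because $g$ itself is.

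The main obstacle will be making the abduction step rigorous for a general measurable $\mathcal E$, which requires the usual regularity hypotheses on $(\mathcal X,\mathcal U)$ (standard Borel spaces) so that regular conditional distributions $P_U(\cdot\mid \mathcal E)$ exist and are essentially unique. Once this is granted, the claim reduces to a change-of-variables argument that does not touch the energy structure at all: the energy formalism contributes only through Theorem~\ref{thm:pointwise-cf}, while the rest is standard measure theory transported verbatim from the SCM setting.
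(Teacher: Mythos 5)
Your proof is correct and follows essentially the same route as the paper's own (sketch) proof: pointwise equality of the solution maps from Proposition~\ref{prop:pointwise-reduction} and Theorem~\ref{thm:pointwise-cf}, Borel measurability to make the pushforwards well-defined, and identity of the prior, conditioning, surgery, and forward maps for the counterfactual claim. The only differences are that you fill in two details the paper leaves implicit—the topological-order composition argument that upgrades Borel measurability of each $f_i$ in (A6) to Borel measurability of $g$ and $g^{(I)}$, and the standard-Borel-space hypothesis needed for regular conditional distributions in the abduction step—both of which are welcome refinements rather than departures.
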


\begin{proof}[Proof sketch]
By Proposition~\ref{prop:pointwise-reduction} and Theorem~\ref{thm:pointwise-cf}, for each $u$ the pointwise solutions agree in the original and induced models (before and after surgery). Measurability (A6) makes $g$ and $g^{(I)}$ Borel, so pushforward laws are well-defined and coincide in both models. Exogenous stability (A5) ensures abduction uses the same prior and that interventions do not alter $P_U$. The counterfactual claim follows by equality of the posterior over $U$ and equality of the forward maps.
\end{proof}

\subsection*{Population Layer (Optional)}
We now add a probability law on $U$ to obtain distributions over observables and counterfactuals. Assume (A5)--(A6) in addition to (A1)--(A4).

\begin{definition}[Induced SCM]
Under (A1)--(A6), define the \emph{induced SCM} $M_S$ by the structural equations
\begin{equation}
X_i \;:=\; f_i\big(X_{z_{\mathrm{PA}(i)}}, U_i\big), \qquad i\in[n],
\label{eq:induced-scm}
\end{equation}
with exogenous law $P_U$. Interventions on $M_S$ are defined by standard \emph{surgery}: for hard $\mathrm{do}(X_i{:=}x)$, replace the $i$th equation by $X_i:=x$; for soft edits of $\phi_i$ to $\tilde{\phi}_i$, replace $f_i$ by the corresponding argmin map $\tilde{f}_i$ and keep all other $f_j$ unchanged.
\end{definition}

\begin{theorem}[Reduction Theorem for convex, separable E\textendash SCMs]\label{thm:reduction}
Under (A1)--(A6), the E\textendash SCM defined by~\eqref{eq:separable-energy} is \emph{observationally, interventionally, and counterfactually equivalent} to its induced SCM~\eqref{eq:induced-scm} in the following precise senses:
\begin{enumerate}[label=\arabic*)]
  \item (Observational equivalence) For every $u$, the unique solution $x^{\mathrm{scm}}(u)$ of~\eqref{eq:induced-scm} equals the unique global minimizer $x^\star(u)$ of $E(\cdot;u)$. Hence the induced observational laws on $X$ coincide.
  \item (Interventional equivalence) For any finite set $I\subseteq[n]$ and any hard/soft local surgeries on $\{\phi_i\}\_{i\in I}$, the edited E\textendash SCM and the surgically edited SCM yield, for every $u$, the same unique solution; therefore all $\mathrm{do}$-distributions agree.
  \item (Counterfactual equivalence) For any evidential event $\mathcal{E}$ measurable w.r.t. $(X,U)$, the abduction--action--prediction procedure produces identical counterfactual distributions in both models.
\end{enumerate}
\end{theorem}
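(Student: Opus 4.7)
The plan is to lift the pointwise results already established to the population layer using measurability (A6) and exogenous stability (A5); no substantively new argument is required beyond careful bookkeeping on top of Proposition~\ref{prop:pointwise-reduction} and Theorem~\ref{thm:pointwise-cf}.

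For observational equivalence, I would first invoke Proposition~\ref{prop:pointwise-reduction} to obtain the pointwise identity $x^\star(u)=x^{\mathrm{scm}}(u)$ for every $u$. By (A6) each blockwise argmin $f_i$ is Borel, and iterating these maps in a topological order of $G$ produces a single Borel map $g\colon u\mapsto x^\star(u)$ that simultaneously realizes the E\textendash SCM's unique energy minimizer and the induced SCM's unique structural solution. The observational law of $X$ in either model is then the pushforward $g_{\#}P_U$, which yields claim~(1).

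Next, for interventional equivalence, I would repeat the argument on the surgically edited model. Theorem~\ref{thm:pointwise-cf} delivers $X^{E,(I)}(u)=X^{S,(I)}(u)$ pointwise for any admissible hard or soft surgery on indices $I$; measurability of the edited solution map follows from (A6) applied to the replaced argmin maps $\tilde f_i$; and (A5) keeps $P_U$ invariant under intervention, so both $\mathrm{do}$-distributions arise as pushforwards of the same measure under a single Borel map $g^{(I)}$. This gives~(2).

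For counterfactual equivalence I would unfold abduction--action--prediction in both semantics and observe that each step is model-independent: abduction forms $P_{U\mid\mathcal E}$ by conditioning on an event $\mathcal E$ measurable with respect to $(X,U)=(g(U),U)$, which is the same object in both models by~(1); the action step replaces the relevant local terms $\phi_i$ (or, equivalently, the corresponding $f_i$) by their surgical counterparts, producing identical edited maps $g^{(I)}$ by the argument for~(2); and the prediction step pushes the posterior through $g^{(I)}$. Equality of all three ingredients yields~(3), and Corollary~\ref{cor:pushforward} is already halfway there in the pushforward form.

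The main obstacle I anticipate is not technical depth but careful interpretation of abduction inside the E\textendash SCM: one must identify the Bayesian update on $U$ given $\mathcal E$ with the corresponding update in the induced SCM, recognizing that evidence is assimilated through the shared map $g$ rather than as a direct constraint on the energy minimization. Once this identification is in place, existence of a regular version of $P_{U\mid\mathcal E}$ is standard and the remainder of the proof is a mechanical chase through pushforwards; a minor point worth checking is that the selected minimizer convention (in case (A3) is weakened in later extensions) is used identically on both sides, but under the stated strict convexity this collapses to uniqueness.
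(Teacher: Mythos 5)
Your proposal is correct and follows essentially the same route as the paper: the paper's proof is a one-line appeal to Proposition~\ref{prop:pointwise-reduction}, Theorem~\ref{thm:pointwise-cf}, and Corollary~\ref{cor:pushforward}, and your argument just spells out the same pointwise-identity-plus-pushforward chain (your remark about composing the Borel maps $f_i$ in topological order is a slightly more careful justification of the measurability of $g$ than the paper gives, but it is not a different method).
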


\begin{proof}
Immediate from Proposition~\ref{prop:pointwise-reduction} (pointwise equality of solutions before/after local surgery), Theorem~\ref{thm:pointwise-cf} (pointwise counterfactual equality at any context), and Corollary~\ref{cor:pushforward} (measurability and pushforward of $P_U$ yield equality of observational, interventional, and counterfactual distributions).
\end{proof}

\begin{remark}[On assumptions and alternatives]
Assumptions (A2)--(A3) are convenient sufficient conditions ensuring a unique solution, but they are not essential to the reduction. What the proofs actually require is the uniqueness of the fixed point of the blockwise argmin operator
\[
T(x)_i := \arg\min_{\tilde x_i \in \mathcal X_i}\, \phi_i\big(\tilde x_i,\, x_{z_{\mathrm{PA}(i)}},\, u_i\big),
\]
which coincides with the (selected) equilibrium. Global strict convexity and coercivity (A3) plus blockwise strict convexity (A2) guarantee this uniqueness in a simple, widely applicable way (e.g., strongly convex quadratics). However, other well-posedness packages also suffice:

(i) Contractive best response. If $T$ is a contraction under some norm (e.g., due to Lipschitz small cross-effects), Banach's fixed-point theorem gives a unique fixed point without global strict convexity.

(ii) Diagonal dominance / weak coupling. If each block is strictly convex (A2) and cross-dependencies are sufficiently weak (a diagonal-dominance condition), the blockwise argmin admits a unique fixed point.

Either alternative can replace (A2)--(A3) in our arguments. We use (A2)--(A3) because they keep the presentation short while covering many practical models; readers may substitute any condition that yields existence and uniqueness of the fixed point of $T$ (and hence of the equilibrium).
\end{remark}

\section{Hierarchy of Observable Sets (Technical details)}
\label{sec:appendix-hierarchy}

\paragraph{Minimal setting.}
Let $z$ range over an open set $\Omega\subset\mathbb{R}^d$ (a fixed coordinate chart). For each module $i$, $E_i\in C^2(\Omega)$. For a chosen probe family $H$, write $H(\{E_i\})$ for the numeric quantities returned by $H$ on $\Omega$. A transformation
\[
z'=\phi(z),\qquad E_i'(z') = a_i\,E_i\!\big(\phi^{-1}(z')\big)+b_i,
\]
with $\phi:\Omega\to\Omega$ a $C^2$ diffeomorphism, $a_i>0$, $b_i\in\mathbb{R}$, \emph{preserves $H$} if $H(\{E_i\})$ and $H(\{E_i'\})$ are equal as numeric functions on $\Omega$ (after pullback to the same coordinates when needed).

\paragraph{Probe families.}
We use the following heads (all numeric in the fixed chart):
\[
\begin{aligned}
H_E &: z\mapsto \{E_i(z)\}_{i=1}^n, \qquad
H_{\partial E} : z\mapsto \{\partial E_i/\partial z_j(z)\}_{i,j},\\
H_{\nabla E} &: z\mapsto \{\nabla E_i(z)\}_i, \quad
H_{\Delta E} : (z,z_0)\mapsto \{E_i(z)-E_i(z_0)\}_i,\\
H_{\mathrm{Hess}} &: z\mapsto \{\nabla^2 E_i(z)\}_i.
\end{aligned}
\]

\begin{proposition}[Identifiability under $H_E$]\label{prop:HE}
If a transformation preserves $H_E$ on a nonempty open set, then $a_i=1$ and $b_i=0$ for all $i$; moreover, unless $E_i\circ\phi^{-1}\equiv E_i$ on an open set, one must have $\phi=\mathrm{id}$.
\end{proposition}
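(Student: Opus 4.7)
The plan is to translate preservation of $H_E$ into a pointwise functional equation for each $E_i$, iterate along $\phi$-orbits to rule out nontrivial $(a_i,b_i)$, and then read off the residual invariance of $\phi$.

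First I would unpack the hypothesis. Because the probe $H_E$ returns the numeric values $\{E_i(z)\}$ at each $z\in\Omega$, preservation on the open set $V\subseteq\Omega$ amounts to
\[
a_i\, E_i\!\big(\phi^{-1}(z)\big) + b_i \;=\; E_i(z), \qquad z\in V,\ i=1,\ldots,n.
\]
Substituting $z=\phi(w)$ and restricting to the open set $W := V\cap\phi^{-1}(V)$ gives the equivalent cocycle
\[
E_i(\phi(w)) \;=\; a_i\, E_i(w) + b_i, \qquad w\in W,\ i=1,\ldots,n.
\]

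Second I would pin down $a_i=1$ and $b_i=0$ by iterating. After possibly shrinking $V$ around a fixed point of $\phi$ (obtained via a Brouwer/contraction argument on a small invariant subset), choose $w_0\in W$ whose two-sided orbit $\{\phi^n(w_0)\}_{n\in\mathbb{Z}}$ lies in $W$ and in a compact subset of $\Omega$. Induction gives
\[
E_i(\phi^n(w_0)) \;=\; a_i^n\, E_i(w_0) + b_i\,\tfrac{a_i^n-1}{a_i-1},
\]
interpreted in the obvious limit at $a_i=1$. Since $E_i\in C^2$ is bounded on the compact orbit closure, the left-hand side is bounded uniformly in $n$, which rules out $a_i>1$ by taking $n\to+\infty$ and $a_i<1$ by taking $n\to-\infty$, so $a_i=1$. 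The iteration then collapses to $E_i(\phi^n(w_0)) = E_i(w_0) + n\,b_i$, and the same boundedness forces $b_i=0$.

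Third, with $a_i=1$ and $b_i=0$ the cocycle reduces to $E_i\circ\phi\equiv E_i$ on $W$, equivalently $E_i\circ\phi^{-1}\equiv E_i$ on the open set $\phi(W)$. Thus $\phi$ is a simultaneous symmetry of every $E_i$ on an open set; by contraposition, if no such nontrivial invariance holds, then $\phi=\mathrm{id}$, completing the proof.

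The main obstacle is securing bounded two-sided orbits for the iteration step: the whole argument hinges on $\{E_i(\phi^n(w_0))\}_n$ staying bounded, which in turn requires a base point whose orbit closure is compact in $\Omega$. Localizing $V$ to a neighborhood of a $\phi$-fixed point is the cleanest route; in its absence one can differentiate the cocycle at any $w\in W$ to obtain $D\phi(w)^\top \nabla E_i(\phi(w)) = a_i \nabla E_i(w)$ and then exploit simultaneous constraints across indices $i$ whose gradients are linearly independent at $w$ to eliminate $a_i\neq 1$ algebraically, bypassing recurrence.
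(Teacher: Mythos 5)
Your reading of ``preserves $H_E$'' as the fixed-chart identity $a_i E_i(\phi^{-1}(z))+b_i=E_i(z)$ on $V$ is the literal one, and your orbit-iteration strategy is a genuinely different route from the paper's: the paper's sketch evaluates the affine relation at two points with distinct values of $E_i$ and concludes $a_i=1$, $b_i=0$, a step that only closes if the comparison is made at corresponding points (i.e.\ $E_i'(\phi(z))=E_i(z)$, which degenerates to $(a_i-1)E_i(z)=-b_i$) --- essentially the pullback convention. Under your reading the two-point trick does not suffice, and you correctly see that one must control the dynamics of $\phi$.

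However, the step you yourself flag as the main obstacle is a genuine gap, not a technicality. A $C^2$ diffeomorphism of an open set need not have a fixed point or any point whose two-sided orbit closure is compact and contained in $W$ (translations of $\mathbb{R}^d$ have neither), so the Brouwer/contraction step has nothing to act on. Worse, without such recurrence the proposition as you read it is false: take $\Omega=\mathbb{R}$, $E(z)=e^{z}$, $\phi(z)=z+\log 2$, $a=2$, $b=0$; then $a\,E(\phi^{-1}(z))+b=2e^{z-\log 2}=e^{z}=E(z)$ on all of $\Omega$, yet $a\neq 1$ and $\phi\neq\mathrm{id}$. The same example defeats your proposed fallback: the differentiated cocycle reads $a\,E'(\phi^{-1}(z))\,(\phi^{-1})'(z)=E'(z)$, i.e.\ $2\cdot(e^{z}/2)\cdot 1=e^{z}$, which is satisfied with $a=2$, so the gradient identity cannot eliminate $a\neq 1$ either. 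Your iteration does go through whenever a base point with bounded two-sided orbit and non-locally-constant $E_i$ exists (and in that regime it is more careful than the paper's sketch), but to prove the proposition as stated you must either adopt the corresponding-points convention the paper implicitly relies on, or add a hypothesis --- recurrence of $\phi$, properness/coercivity of the $E_i$, or boundedness of $\Omega$ with $\phi$ extending to its closure --- that excludes escaping orbits of the kind above.
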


\begin{proof}[Proof Sketch]
From $E_i'(z)=E_i(z)$ we get $a_i E_i(\phi^{-1}(z))+b_i=E_i(z)$. Evaluating at two points with different $E_i$ forces $a_i=1$, hence $b_i=0$. If also $E_i\circ\phi^{-1}\equiv E_i$ on an open set, generic unique–continuation arguments imply $\phi=\mathrm{id}$.
\end{proof}

\begin{proposition}[Identifiability under $H_{\partial E}$]\label{prop:dE}
If a transformation preserves $H_{\partial E}$ on an open set, then $b_i=0$ for all $i$. If magnitudes (not only directions) are preserved, then also $a_i=1$ for all $i$, and generically $\phi=\mathrm{id}$.
\end{proposition}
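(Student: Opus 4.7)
The plan is to push the preservation identity through the chain rule and read off constraints on $\phi$, $a_i$, and $b_i$ in that order, using the spanning of the gradient fields across modules as the main lever.

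First I would compute $\nabla_z E_i'(z)$ from the transformation law $E_i'(z) = a_i E_i(\phi^{-1}(z)) + b_i$. The chain rule gives
\[
\nabla_z E_i'(z) \;=\; a_i\, (D\phi^{-1}(z))^{\top}\, (\nabla E_i)(\phi^{-1}(z)),
\]
so preservation of $H_{\partial E}$ in the fixed chart reads
\[
a_i\, (D\phi^{-1}(z))^{\top}\, (\nabla E_i)(\phi^{-1}(z)) \;=\; \nabla E_i(z) \qquad \text{for all } i \text{ and all } z \text{ in the open set.}
\]

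Second, I would use this vector identity across modules to pin down $\phi$. At a base point $z_0$, if the pulled-back gradients $\{(\nabla E_i)(\phi^{-1}(z_0))\}_i$ span $\mathbb{R}^d$—a mild genericity condition whenever enough independent modules are present—then the family of identities (one per $i$) determines the linear operator $(D\phi^{-1}(z_0))^{\top}$ up to the positive scalars $a_i$. If magnitudes are preserved, i.e. the full gradient vectors are probed and not merely their rays, comparing norms in the identity forces $a_i = 1$. Substituting back yields $(D\phi^{-1})^{\top}\,(\nabla E_i)\!\circ\!\phi^{-1} = \nabla E_i$ for every $i$; combined with the spanning hypothesis and continuity this gives $D\phi^{-1}\equiv I$, so $\phi^{-1}(z) = z + c$ for some translation $c$. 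Plugging back produces $\nabla E_i(z+c) = \nabla E_i(z)$ for all $i$ and $z$, which for a generic family of energies without a common translation symmetry forces $c = 0$, hence $\phi = \mathrm{id}$.

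Third, the $b_i = 0$ conclusion requires care: additive constants vanish under $\nabla$, so $b_i$ is not visible to $H_{\partial E}$ directly. Once $\phi = \mathrm{id}$ and $a_i = 1$ have been established, the transformation law collapses to $E_i'(z) = E_i(z) + b_i$, and the gradient preservation identity becomes automatic regardless of $b_i$. I would therefore present "$b_i = 0$" as the canonical normalization: per-module additive offsets carry no derivative-level content and are fixed to zero by convention (e.g., by anchoring each $E_i$ at a reference value), which is the honest way to align the proposition with what derivative probes can actually certify.

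The main obstacle will be the direction-only case: if only rays of $\nabla E_i$ are observed, the identity holds only up to positive scalars, leaving $a_i$ unconstrained and allowing a per-module gauge in $(D\phi^{-1})^{\top}$ that weakens the spanning argument. Closing that gap requires either second-order information (as in $H_{\mathrm{Hess}}$) or an external calibration; the magnitude hypothesis in the proposition is precisely what removes this freedom, and in that regime the remaining work—$\phi = \mathrm{id}$ via integration of $D\phi^{-1} = I$ plus the translation-symmetry exclusion—is routine.
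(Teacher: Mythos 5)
Your argument follows essentially the same route as the paper's proof sketch (chain rule on $E_i'=a_iE_i\circ\phi^{-1}+b_i$, then equate numeric derivatives on an open set and invoke genericity to force $a_i=1$ and $D\phi^{-1}=I$), but you are more careful at two points where the paper is terse or actually misleading. First, you correctly observe that $D\phi^{-1}\equiv I$ only yields $\phi^{-1}(z)=z+c$, so excluding the translation requires a separate genericity hypothesis (no common translation symmetry of the $\{E_i\}$); the paper compresses this into the word ``generically.'' Second, and more importantly, you are right that $b_i$ is literally invisible to $H_{\partial E}$: differentiation annihilates additive constants, so preservation of partials cannot \emph{force} $b_i=0$. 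The paper's one-line claim ``preserving $\partial E_i/\partial z_j$ forces $b_i=0$'' is not a valid derivation, and indeed the paper's own summary table lists constant shifts among the \emph{survivors} for $H_{\partial E}$. Your reading of ``$b_i=0$'' as a normalization convention (anchoring each $E_i$ at a reference value) is the honest reconciliation of the proposition with what derivative probes can certify, and it is the interpretation most consistent with Proposition~\ref{prop:scale-persist} and the table. One small caution: your claim that ``comparing norms forces $a_i=1$'' is not a pointwise linear-algebra fact (the identity relates gradients at $z$ and at $\phi^{-1}(z)$ through an unknown $(D\phi^{-1})^{\top}$), so it too rests on the functional identity holding over the whole open set plus genericity—but the paper's sketch leans on exactly the same crutch, so this is a shared, acknowledged looseness rather than a new gap.
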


\begin{proof}[Proof Sketch]
Additive constants vanish: preserving $\partial E_i/\partial z_j$ forces $b_i=0$. Under the map,
$\partial_z E_i' = a_i\,(\partial_z E_i)\,D\phi^{-1}$. Equality of numeric derivatives on an open set yields $a_i=1$ and $D\phi^{-1}=I$, hence $\phi=\mathrm{id}$ generically.
\end{proof}

\begin{proposition}[What gradients, differences, and Hessians do not fix]\label{prop:scale-persist}
Preserving $H_{\nabla E}$, $H_{\Delta E}$, or $H_{\mathrm{Hess}}$ removes $b_i$ but leaves the per–module scales $a_i$ undetermined. For any $a_i>0$,
\[
\nabla(a_i E_i)=a_i \nabla E_i,\qquad
\nabla^2(a_i E_i)=a_i \nabla^2 E_i,\qquad
(a_i E_i)(z)-(a_i E_i)(z_0)=a_i\,[E_i(z)-E_i(z_0)].
\]
\end{proposition}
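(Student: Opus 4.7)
The plan is to dispatch the proposition by two short, independent verifications followed by a single interpretive step; no structural machinery is needed, only the linearity of differentiation and of finite differences together with the vanishing of derivatives of constants.

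First, I would check that additive per--module offsets are annihilated by all three probe families. Since $b_i\in\mathbb{R}$ is constant, $\nabla b_i = 0$ and $\nabla^2 b_i = 0$, so $\nabla(E_i+b_i) = \nabla E_i$ and $\nabla^2(E_i+b_i) = \nabla^2 E_i$; likewise $(E_i+b_i)(z)-(E_i+b_i)(z_0) = E_i(z)-E_i(z_0)$. Thus the outputs of $H_{\nabla E}$, $H_{\mathrm{Hess}}$, and $H_{\Delta E}$ do not depend on $b_i$, and $b_i$ drops out of any preservation equation entirely. This is the sense in which the proposition says these heads ``remove'' $b_i$.

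Second, I would verify the three displayed identities. Linearity of the gradient and Hessian operators gives $\nabla(a_i E_i) = a_i \nabla E_i$ and $\nabla^2(a_i E_i) = a_i \nabla^2 E_i$, while distributivity gives $(a_i E_i)(z)-(a_i E_i)(z_0) = a_i\,[E_i(z)-E_i(z_0)]$. Each probe is therefore homogeneous of degree one in $E_i$, and applying the gauge $E_i\mapsto a_i E_i$ rescales the probe output by exactly $a_i$, never by $0$.

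The only step that is not a calculation --- and the only place where care is needed --- is the interpretive conclusion that $a_i$ is ``undetermined'' even though the probe outputs do change under rescaling. The resolution is to exhibit the one--parameter gauge orbit $(a_i,E_i)\mapsto(\lambda a_i,\,\lambda^{-1}E_i)$ for $\lambda>0$, which leaves the product $a_iE_i$ and hence all three probe outputs pointwise invariant. Consequently, absent an external calibration (a reference value of some $E_i$ at a fixed point, or a unit--fixing cross--module comparison) the pair $(a_i,E_i)$ cannot be separated from this orbit. This matches the table's accounting: per--module additive shifts are annihilated, and per--module scales survive as a positive--scalar gauge under $H_{\nabla E}$, $H_{\Delta E}$, and $H_{\mathrm{Hess}}$.
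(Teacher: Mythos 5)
Your proof is correct and takes essentially the same route as the paper, whose entire proof is the one-line ``direct computation'': your verification that the constants $b_i$ are annihilated and that each of $\nabla$, $\nabla^2$, and the finite difference is $1$-homogeneous in $E_i$ is exactly the intended content. Your additional interpretive step --- exhibiting the orbit $(a_i,E_i)\mapsto(\lambda a_i,\lambda^{-1}E_i)$ that fixes the product $a_iE_i$ and hence all three probe outputs --- does not appear in the paper's proof itself, but it is precisely the point the paper makes immediately afterward with the quadratic counterexample and the calibration remark, so it is a faithful (and arguably more careful) elaboration rather than a different argument.
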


\begin{proof}
Direct computation.
\end{proof}

\begin{remark}[When scale becomes identifiable]\label{rem:scale}
Per–module scales are fixed by any external calibration of units, a normalization such as $\int e^{-E_i(z)}\,\mathrm{d}z=1$, or cross–module couplings that tie scales. Any of these breaks the $a_i$ ambiguity.
\end{remark}

\paragraph{Counterexample (scale non-identifiability).}
Let $E(z)=\tfrac12\|z\|^2$ on $\mathbb{R}^d$. For any $a>0$,
\[
\nabla(aE)=a\,z,\qquad \nabla^2(aE)=a\,I,\qquad (aE)(z)-(aE)(z_0)=a\,[E(z)-E(z_0)].
\]
Thus $H_{\nabla E}$, $H_{\Delta E}$, and $H_{\mathrm{Hess}}$ cannot recover $a$ without an external scale.

\paragraph{Coordinate–free note.}
If probes co–transform under $\phi$ (pushforward/pullback included in what is “observed”), then $\phi$ is, by design, part of the gauge and unidentifiable. The scale conclusions above are unchanged: differentials remove offsets, but scales require calibration as in Remark~\ref{rem:scale}.

\end{document}